\relax

\newif\ifarxiv
\arxivtrue

\newcommand{\forarxiv}[2]{\ifarxiv{}#1\else{}#2\fi}

\ifarxiv
\documentclass[a4paper,twocolumn,10pt]{article}
\usepackage[left=2cm,right=2cm,top=2cm,bottom=2cm]{geometry}
\else
\documentclass[letterpaper]{article}
\usepackage{aaai21}
\fi

\usepackage{times}
\usepackage{helvet}
\usepackage{courier}
\usepackage[hyphens]{url}
\usepackage{graphicx}
\urlstyle{rm}

\usepackage{natbib}
\usepackage{caption}
\frenchspacing
\ifarxiv
\else
\setlength{\pdfpagewidth}{8.5in}
\setlength{\pdfpageheight}{11in}
\fi
\usepackage[switch]{lineno}
\pdfinfo{
/Title (Policy-Guided Heuristic Search with Guarantees)
/Author (Laurent Orseau, Levi H. S. Lelis)
/TemplateVersion (2021.1)
}

\usepackage[utf8x]{inputenc}
\usepackage{booktabs}
\usepackage{amsfonts}
\usepackage{nicefrac}
\usepackage{microtype}

\usepackage{enumitem}

\usepackage{amsmath,amssymb,amsthm}
\usepackage[usenames,dvipsnames,svgnames,table]{xcolor}
\usepackage{wrapfig}
\usepackage{stmaryrd}

\usepackage{tikz}
\usetikzlibrary{trees}
\usepackage{forest}
\usetikzlibrary{matrix}

\usepackage{algorithm}
\usepackage{subcaption}
\usepackage{listings}

\lstset{basicstyle=\ttfamily,breaklines=true,basewidth=0.5em}
\lstset{framextopmargin=1pt,frame=lines}

\usepackage{letltxmacro}
\newcommand*{\SavedLstInline}{}
\LetLtxMacro\SavedLstInline\lstinline
\DeclareRobustCommand*{\lstinline}{
  \ifmmode
    \let\SavedBGroup\bgroup
    \def\bgroup{
      \let\bgroup\SavedBGroup
      \hbox\bgroup
    }
  \fi
  \SavedLstInline
}
\newcommand{\code}[1]{\texttt{\lstinline[mathescape,classoffset=1,keywordstyle=\color{black},basicstyle=\color{black},classoffset=0,keywordstyle=\color{black}]{#1}}}

\usepackage{appendix}
\usepackage[capitalise]{cleveref}
\crefname{listing}{Algorithm}{Algorithms}
\Crefname{listing}{Algorithm}{Algorithms}

\theoremstyle{definition}
\newtheorem{theorem}{Theorem}
\newtheorem{lemma}[theorem]{Lemma}
\newtheorem{corollary}[theorem]{Corollary}

\newtheorem{assumptions}[theorem]{Assumptions}

\newenvironment{example}
  {\pushQED{\qed}\examplex}
  {\popQED\endexamplex}

\definecolor{darkred}{rgb}{.7,0,0}
\definecolor{darkgreen}{rgb}{0,.5,0}
\definecolor{darkblue}{rgb}{0,0,.8}
\definecolor{darkcyan}{rgb}{0,0.6,.6}
\definecolor{darkorange}{rgb}{.8,.4,0}
\definecolor{gray}{rgb}{.4,.4,.4}

\iffalse
\newcommand{\todo}[1]{\textcolor{darkorange}{(\emph{TODO: #1})}}
\newcommand{\comment}[1]{\textcolor{darkblue}{(\emph{#1})}}
\newcommand{\warning}[1]{\textcolor{red}{(\emph{WARNING: #1})}}
\newcommand{\quest}[1]{\textcolor{darkgreen}{(\emph{Q: #1})}}

\else
\newcommand{\todo}[1]{}
\newcommand{\comment}[1]{}
\newcommand{\warning}[1]{}
\newcommand{\quest}[1]{}

\fi

\newcommand{\ie}{\emph{i.e.}, }
\newcommand{\eg}{\emph{e.g.}, }

\DeclareMathOperator*{\argmin}{\text{argmin}}

\newcommand{\D}{\text{d}}
\newcommand{\grad}{\nabla}

\newcounter{alphoversetcount}
\newcommand{\alphnextref}{\stepcounter{alphoversetcount}\text{(\alph{alphoversetcount})}}
\newcommand{\alphoverset}[1]{\overset{\alphnextref{}}{#1}}
\newcommand{\resetalph}{\setcounter{alphoversetcount}{0}}

\setcounter{secnumdepth}{2}

\title{
Policy-Guided Heuristic Search with Guarantees
}
\author{
    Laurent Orseau,\textsuperscript{\rm 1}
    Levi H. S. Lelis\textsuperscript{\rm 2} \\
}
\ifarxiv
\date{
    \textsuperscript{\rm 1}DeepMind, UK \\
    \textsuperscript{\rm 2}Department of Computing Science, Alberta Machine Intelligence Institute (Amii), University of Alberta, Canada\\
    lorseau@google.com, levi.lelis@ualberta.ca
}
\else
\affiliations{
    \textsuperscript{\rm 1}DeepMind, UK \\
    \textsuperscript{\rm 2}Department of Computing Science, Alberta Machine Intelligence Institute (Amii), University of Alberta, Canada\\
    lorseau@google.com, levi.lelis@ualberta.ca

}
\fi

\newcommand{\node}{n}
\newcommand{\parent}{\text{par}}
\newcommand{\children}{\mathcal{C}}

\newcommand{\ancn}{\text{anc}_*}
\newcommand{\descn}{\text{desc}_*}
\newcommand{\goalset}{\nodeset_{\mathcal{G}}}
\newcommand{\rootnode}{\node_0}
\newcommand{\nodeset}{\mathcal{N}}
\newcommand{\leafset}{\mathcal{L}}

\newcommand{\stateset}{\mathcal{S}}

\newcommand{\pol}{\pi}

\newcommand{\levints}{\text{LevinTS}}
\newcommand{\ouralg}{\text{PHS}}
\newcommand{\ouralglong}{Policy-guided Heuristic Search}
\newcommand{\ouralgh}{\ouralg\ensuremath{_h}}
\newcommand{\ouralghat}{\ouralg*}

\newcommand{\flts}{\varphi}
\newcommand{\fltsmon}{\flts^{\scriptscriptstyle+}}
\newcommand{\jlts}{\eta}
\newcommand{\jltsmon}{\jlts^{\scriptscriptstyle+}}
\newcommand{\glts}{g}
\newcommand{\hlts}{h}
\newcommand{\llts}{\ell}
\newcommand{\Llts}{L}
\newcommand{\jltshat}{\hat{\jlts}}
\newcommand{\fltshat}{\hat{\flts}}

\newcommand{\state}{\code{state}}
\newcommand{\issolution}{\code{is_solution}}

\newcommand{\by}[1]{\ensuremath{{#1}{\times}{#1}}}

\ifarxiv
\lstset{
  language=python,
  showstringspaces=false,
  frame=none,
  escapeinside={(*}{*)},
  mathescape=true,
  keywordstyle=\color{blue},
  morekeywords={to},
  commentstyle=\color{darkorange}\emph,
  classoffset=1,
  keywordstyle=\color{darkred},
  morekeywords={BFS,is_solution,incur_loss,state,get,PHS,PHS_with_reexpansions,extract_min,insert,priority_queue,children},
  classoffset=0,
  }
\else
\lstset{
  language=python,
  showstringspaces=false,
  frame=none,
  escapeinside={(*}{*)},
  mathescape=true,
  keywordstyle=\color{black}\textbf,
  morekeywords={to},
  commentstyle=\color{black}\emph,
  classoffset=1,
  keywordstyle=\color{black}\textbf,
  morekeywords={BFS,is_solution,incur_loss,state,get,PHS,PHS_with_reexpansions,extract_min,insert,priority_queue,children},
  classoffset=0,
  }
\fi

\begin{document}
\nolinenumbers
\maketitle

\begin{abstract}
The use of a policy and a heuristic function for guiding search can be quite effective in adversarial problems, as demonstrated by AlphaGo and its successors, which are 
based on the PUCT search algorithm. 
While PUCT can also be used to solve single-agent deterministic problems, 
it lacks guarantees on its search effort and it can be computationally inefficient in practice.  
Combining the A* algorithm with a learned heuristic function 
tends to work better in these domains, but A* and its variants do not use a policy.
Moreover, the purpose of using A* is to find solutions of minimum cost, 
while we seek instead to minimize the \emph{search loss} (\eg the number of search steps).
\levints{} is guided by a policy and provides guarantees on the number of search steps that relate to the quality of the policy, but it does not make use of a heuristic function. In this work we introduce \ouralglong{} (\ouralg), a novel search algorithm that uses both a heuristic function and a policy and has theoretical guarantees on the search loss that relates to both the quality of the heuristic and of the policy.
We show empirically on the sliding-tile puzzle, Sokoban, and a puzzle from the commercial game `The Witness' that \ouralg{} enables the rapid learning of both a policy and a heuristic function and compares favorably with A*, Weighted A*, Greedy Best-First Search, \levints, and PUCT in terms of number of problems solved and search time in all three domains tested.
\end{abstract}

\comment{Renamings to be careful about:
\begin{itemize}
    \item instance $\rightarrow$ problem
    \item goal $\rightarrow$ solution (?)
    \item cost function $\rightarrow$ function
    \item cost $\rightarrow$ value
    \item state(n)=n to avoid re-expansions: Write an Assumption to use in the theorems?
    \item cut $\rightarrow$ pruning
\end{itemize}
}\comment{To do if accepted:
\begin{itemize}
    \item Say where to find the appendix (arXiv)
    \item Update Footnote 2 about datasets urls
    \item Make datasets available
    \item All formal results must have names
    \item The appendix is not published, hence we must add a link to the arxiv version
    (as said in the cfp)
    \item Give full PUCT pseudo-alg in appendix
    \item Say: for a fixed heuristic, the policy and the heuristic are independent of each other, 
    making learning easier/more stable: the update of the policy does not depend on the value of the heuristic and vice versa.
\end{itemize}}
\comment{Re PUCT: Our initial implementation was already more efficient than
DeepMind's \url{https://github.com/deepmind/acme/blob/0dfe551a54e6dc10cb0451ef4a683f5a0d1acd87/acme/agents/tf/mcts/search.py} by the use of a virtual loss (warning: we also said that we avoided repeated states. But it was buggy and happened at best during a 32 batch.).
However we realize after the deadline that this could be made even more efficient by avoiding by avoiding repeated states during the whole search, although it requires some guess work about what to do with repeating states on the same trajectory. Currently, we seeing a state for the second time (which is likely to result in loops) we stop the tree descent and backpropagate $\llts_{\max}$.
}
\warning{Does DeepCubeA use Q(s, a) or V(s)? If the former, then that may explain why WA* doesn't perform so well on Sokoban.}

\section{Introduction}

In this work\forarxiv{\footnote{This is an extended version of the paper accepted at AAAI 2021 with the same title.}}{\footnote{An extended version of this paper can be found on arXiv.}}
we are interested in tackling single-agent deterministic problems. 
This class of problems includes numerous real-world applications such as
robotics, planning and pathfinding, computational biology~\citep{edelkamp2010heuristic},
protein design~\citep{allouche2019protein}, and program synthesis~\citep{cropper2020programs}.

AlphaGo~\citep{silver2017mastering}, 
and descendants such as MuZero~\citep{schrittwieser2019mastering}
 combine a learned value function with a learned policy in the PUCT search algorithm~\citep{rosin2011pucb,kocsis2006uct}, which is a Monte-Carlo tree search algorithm~\citep{chang2005ams,coulom2007efficient}, 
to tackle 
stochastic and adversarial games with complete information,
and also a few single-agent games.
The policy guides the search locally by favoring the most promising children of a node, 
whereas the value function ranks paths globally, thus complementing each other. 
PUCT, based on UCT~\citep{kocsis2006uct} and PUCB~\citep{rosin2011pucb}, is indeed designed for adversarial and stochastic games such as Go and Chess,
and UCT and PUCB come with a guarantee that the value function converges to the true value function in the limit of exploration---however this guarantee may not hold when replacing actual rewards with an estimated value, as is done in the mentioned works.

Although these algorithms perform impressively well for some adversarial games,
it is not clear whether the level of generality of PUCT makes it the best fit for difficult deterministic single-agent problems
where a planning capability is necessary,
such as the PSPACE-hard Sokoban problem~\citep{Culberson1999}.
The more recent algorithm MuZero~\citep{schrittwieser2019mastering} adapts AlphaZero to single-agent Atari games, but these games are mostly reactive and MuZero performs poorly on games that require more planning like Montezuma's Revenge---although this may arguably pertain to MuZero needing to learn a model of the environment.

In the context of single-agent search the value function is known as a \emph{heuristic} function and it estimates the cost-to-go from a given state to a solution state. 
\citet{mcaleer2018solving} used MCTS to learn a heuristic function---but not a policy---to tackle the Rubik's cube,
but later replaced MCTS entirely with weighted A*~\citep{pohl1970heuristic,ebendt2009weighted},
which is a variant of the A* algorithm~\citep{hart1968aFormalBasis}
that trades off solution quality for search time.
They observe that ``MCTS has relatively long runtimes and often produces solutions many moves longer than the length of a shortest path''~\citep{agostinelli2019rubik}, and tackle a few more problems such as the sliding tile puzzle and Sokoban. 

Levin Tree Search (\levints)~\citep{orseau2018policy} uses a learned policy to guide 
its search in single-agent problems
and comes with an upper bound on the number of search steps that accounts for the quality of the policy.

In this work we combine the policy-guided search of the \levints{} algorithm with the heuristic-guided search of A* in an algorithm we call \ouralglong{} (\ouralg).
\ouralg{} retains the upper bound of \levints---we also prove an almost matching lower bound---but we extend this guarantee to the use of a heuristic function, showing that an accurate heuristic
can greatly reduce the number of search steps.

We compare our algorithm  with several policy-guided and heuristic search algorithms when learning
is interleaved with search in the Bootstrap process~\cite{jabbari2011bootstrap}: \levints{} uses a policy; A*, Weighted A* (WA*) and Greedy Best-First Search (GBFS)~\citep{doran1966gbfs} all use a heuristic, and PUCT uses both like \ouralg.
We evaluate these algorithms on the \by5 sliding-tile puzzle, Sokoban (Boxoban), and on a puzzle from the game `The Witness'. Our results show that \ouralg{} performs well on all three domains tested, while every other algorithm tested performs poorly in at least one of these domains.

\section{Notation and Background}\label{sec:notation}

Search algorithms solve single-agent problems by searching in the tree that defines the problem. 
Let $\nodeset$ be the set of all nodes that can be part of such a tree. 
The root of the tree is denoted $\rootnode\in\nodeset$.
For any node $n\in\nodeset$, 
its set of children is $\children(n)$,
its single parent is $\parent(n)$ (the root has no parent),
its set of ancestors including $n$ itself is $\ancn(n)$,
and its set of descendants including $n$ itself is $\descn(n)$;
its depth is $d(n)$ ($d(\rootnode)=0$) and also define $d_0(n) = d(n)+1$.

We say that a node is \emph{expanded} when a search algorithm 
generates the set of children of the node. 
All search algorithms we consider 
are constrained to expand
the root first and, subsequently, a node $n$ can be expanded only if its parent $\parent(n)$ has already been expanded.

A \text{problem} is defined by a root node $\rootnode$, 
a non-negative loss function $\llts:\nodeset\to[0, \infty]$, a set of solution nodes $\goalset\subseteq\descn(\rootnode)$,
and a \state{} function defined below.
There is a predicate \issolution$(n)$ available to the search algorithms
to test whether $n\in\goalset$, 
but it can be used on $n$ only after $\parent(n)$ has been expanded
(\ie $n$ must have been generated).
The loss $\llts(n)$ is incurred by the algorithm when expanding the node $n$.
For simplicity of the formalism, we assume that a node which is tested positive with \issolution{} is implicitly expanded---thus incurring a loss---but has no children.
The \emph{path} loss $\glts(n)$ of a node $n$ is the sum of the losses from the root to $n$, that is $\glts(n) = \sum_{n'\in\ancn(n)}\llts(n')$. For example, if the loss is one for any expanded node $n$, then $\glts(n) = d_0(n)$.
We assume that no infinite path has finite path loss.
For any search algorithm $S$, the \emph{search} loss $\Llts(S, n)$ is the 
sum of the individual losses $\llts(n')$ for all nodes $n'$ that have been expanded by the algorithm $S$, up to and including $n$,
and $\Llts(S, n)=\infty$ if $n$ is never expanded.
For example, if the loss $\llts(n)$ is the time needed to expand node $n$,
then $\Llts(S, n)$ corresponds to the computation time of the whole search when reaching $n$.

A policy $\pol:\nodeset\to[0, 1]$ is defined 
recursively for a child $n'$ of a node $n$:
$\pol(n')=\pol(n)\pol(n'|n)$
where the conditional probability
$\pol(n'|n)\in[0, 1]$ is such that $\sum_{n'\in\children(n)} \pol(n'|n)\leq1$,
and $\pol(\rootnode)=1$.
Therefore, $\pol(n)=\prod_{n'\in\ancn(n)\setminus\{\rootnode\}} \pol(n'|\parent(n'))$.
\warning{Need to remove the root from the product}

Let $\stateset\subseteq\nodeset$ be a set of `canonical nodes'.
The function $\state:\nodeset\to\stateset$ associates a node to a state (a canonical node),
with the constraints that
$\llts(n) = \llts(\state(n))$,
$\issolution(n) = \issolution(\state(n))$
and 
$\{\state(n'): n'\in\children(n)\} = \children(\state(n))$.
Search algorithms may use the \state{} function to avoid expanding nodes with the same states.

\subsection{Background}

The Best-First Search (BFS) search algorithm~\citep{Pearl84} (see \cref{alg:phs}) expands nodes by increasing value,
starting from the root and always expanding children only if their parent has been expanded already.
It does not expand nodes whose states have been visited before,
and returns the first solution node it expands.

The A* search algorithm~\citep{hart1968aFormalBasis} uses both the function $g$
and a heuristic function $h:\nodeset\to[0, \infty]$.
It uses BFS with the evaluation function $f(n)=g(n)+h(n)$.
If the heuristic $h$ is \emph{admissible}, 
\ie $g(n)+h(n)$ is a lower bound on the cost 
of the least-$g$-cost solution node below $n$, then A* is guaranteed
\footnote{Technically, this requires either that re-expansions
are performed or that the heuristic is \emph{consistent}.}
to return a solution with minimal $g$-cost.
Weighted A*~\citep{ebendt2009weighted} is a variant of A* that uses the evaluation function $f_w(n) = g(n) + w \cdot h(n)$ and has the guarantee that the first solution found has a $g$-cost no more than a factor $w$ of the minimum cost solution if $h$ is admissible and $w\geq 1$.

\levints{}~\citep{orseau2018policy} also uses the BFS algorithm, but with the evaluation function $f_\pol = d_0(n)/\pol(n)$ for a given policy $\pol$. 
\levints{} is guaranteed to expand no more than $d_0(n^*)/\pol(n^*)$ nodes until the first solution $n^*$ is found,
that is, with $\llts(\cdot)=1$ for all nodes, $\Llts(\text{\levints{}}, n^*) \leq d_0(n^*)/\pol(n^*)$.
Since $d_0(n^*)$ is fixed, it shows that the better the policy $\pol$, the shorter the search time. \Cref{thm:lower_bound} provides an almost-matching lower bound.

The PUCT algorithm~\citep{silver2016mastering} is not based on BFS,
but on UCT~\citep{kocsis2006uct},
which learns a value function from rewards,
and on PUCB~\citep{rosin2011pucb}, which additionally uses a policy prior.
Both ingredients are used to determine which node to expand next.
The PUCT formula depends on the current number of node expansions performed during search.
This dependency prevents the algorithm from being able to use a priority queue which requires the node values to not change over time.
Hence each time the algorithm performs a single node expansion, it has to go back to the root
to potentially take a different path for the next expansion.
In some cases, this additional cost can lead to a quadratic increase of the computation time compared to the number of node expansions.
Although this search strategy can be effective in stochastic and adversarial environments, it is often wasteful for deterministic single-agent problems.
The original UCT algorithm~\citep{kocsis2006uct} has regret-style guarantees, 
but as noted by \citet{orseau2018policy}, these guarantees are rarely meaningful 
in deterministic single-agent problems where rewards are often 0 until a solution is found; moreover, these guarantees do not hold for modern implementations that replace the rewards and the rollouts with a learned value function.

\subsection{Definition of the Search Problem}

Our overarching objective is to design algorithms that, 
when given a set of unknown tasks, 
solve all of them as quickly as possible
while starting with little to no knowledge about the tasks.
That is, for $K$ tasks, we want to devise an algorithm $S$ that minimizes the total \emph{search} loss $\sum_{k\in[K]} \min_{n^*\in{\goalset}_k}\Llts_k(S, n^*)$.
Observe that this departs from the 
more traditional objective of finding solutions of minimum \emph{path} loss for all tasks, that is, of minimizing $\sum_{k\in[K]} \glts(n^*_k)$
where $n^*_k$ is the solution node found by the algorithm for task $k$.
Hence, we do not require the solutions encountered by the search algorithms to be path-cost optimal. 

To this end we embed our algorithms into the Bootstrap process~\citep{jabbari2011bootstrap}, 
which iteratively runs a search algorithm with a bound on the number of node expansions (or running time) on a set of training tasks. The problems solved in a given iteration are used to update the parameters of a model encoding a heuristic function and/or a policy. The process is then repeated with the newly trained model, possibly after increasing the budget. This process does not use or generate a curriculum, but assumes that there are problems of varying difficulties.

\begin{algorithm}[htb!]
\begin{lstlisting}
def PHS($\rootnode$): return BFS($\rootnode$, $\flts$)

def BFS($\rootnode$, evaluate):
  q = priority_queue(order_by=evaluate)
  q.insert($\rootnode$)
  visited_states = {}
  while q is not empty:
    n = q.extract_min() # node of min value
    s = state(n)
    if s in visited_states:
      continue  # pruning
    visited_states += {s}
    incur_loss $\llts($n$)$
    if is_solution(n):
      return n
    # Node expansion
    for n$'$ in children(n):
      q.insert(n$'$, evaluate(n$'$))
  return False
\end{lstlisting}
\caption{The \ouralglong{} algorithm (\ouralg),
based on the Best-First Search algorithm (BFS).
(Re-expansions are not performed.)}
\label{alg:phs}
\end{algorithm}

\section{\ouralglong{}}

For all our theoretical results, we assume that $\forall n\in\nodeset:$ \state($n$)=$n$ to avoid having to deal with re-expansions\forarxiv{, but see \cref{sec:safepruning} for a more general treatment.}{.}

We generalize \levints{} first by considering arbitrary non-negative losses $\llts(n)$ rather than enforcing $\llts(n) = 1$, and by introducing a \emph{heuristic factor} $\jlts(n)\geq1$.
Our algorithm \ouralglong{} (\ouralg)
simply calls BFS (see \cref{alg:phs})
with the evaluation function $\flts$ defined as:
\begin{align*}
    \flts(n) &= \jlts(n)\frac{\glts(n)}{\pol(n)}\,,
\end{align*}
where $\glts$ and $\pol$ were defined in \cref{sec:notation},
and $\flts(n) = \infty$ if $\pol(n)=0$; \citet{orseau2018policy} discuss
how to prevent $\pol(n)=0$.
The factor $\glts(n)/\pol(n)$ can be viewed as an approximation
of the search loss $\Llts(\ouralg, n)$ when reaching $n$:
$\glts(n)$ is the loss of the path from the root to $n$, and $1/\pol(n)$
plays a similar role to an importance sampling weight
that rescales the (local) path loss to the (global) search loss.
Note that paths with large probability $\pol$ are preferred to be expanded before paths with small $\pol$-values.
The purpose of the new heuristic factor $\jlts(n)$
is to rescale the current estimate of the search loss $\Llts(\cdot, n)$
to an estimate of the search loss $\Llts(\cdot, n^*)$ at the least-cost solution node $n^*$ that descends from $n$.
This is similar to how $f(n)$ is an estimate of $g(n^*)$ in A*.
If both $\jlts(\cdot)$ and $\llts(\cdot)$ are 1 everywhere, \ouralg{} reduces to \levints{}.

Although for \levints{} $f_\pol$ is monotone non-decreasing from parent to child~\citep{orseau2018policy},
this may not be the case anymore for $\flts$ due to $\jlts$.
Thus, for the sake of the analysis we define the monotone non-decreasing evaluation function
$\fltsmon(n) = \max_{n'\in\ancn(n)}\flts(n')$.
Since in BFS nodes are expanded in increasing order of their value,
and a node $n$ cannot be expanded before any of its ancestors,
it is guaranteed in \ouralg{} that a node $n$ is expanded \emph{before} any other node $n'$ with $\fltsmon(n') > \fltsmon(n)$.

Define $\jltsmon(n) = \fltsmon(n)\frac{\pol(n)}{\glts(n)} \geq \jlts(n)$, and $\jltsmon(n)=\jlts(n)$ if $\glts(n)=0$,
such that
\begin{align*}
\fltsmon(n) = \jltsmon(n)\frac{\glts(n)}{\pol(n)}\,.
\end{align*}
Note that even if we ensure that $\jlts(n^*) = 1$ for any solution node $n^*\in\goalset$,
in general $\fltsmon(n^*)$ may still be larger than $\flts(n^*)$,
that is, we may have $\jltsmon(n^*) > 1$.

We now state our main result and explain it below.
Define the set of nodes
$\nodeset_\flts(n) = \{n'\in\descn(\rootnode):\fltsmon(n') \leq \fltsmon(n)\}$ 
of value at most $\fltsmon(n)$
and its set of leaves $\leafset_\flts(n) = \{n'\in\nodeset_\flts(n): \children(n')\cap\nodeset_\flts(n) = \emptyset \}$
that is, the set of nodes in $\nodeset_\flts(n)$ that do not have any children in this set.

\begin{theorem}[\ouralg{} upper bound]\label{thm:upper_bound}
For any non-negative loss function $\llts$,
for any set of solution nodes $\goalset\subseteq\descn(\rootnode)$, 
and for any given policy $\pol$ and any given heuristic factor $\jlts(\cdot) \geq 1$,
\ouralg{} returns a solution node $n^*\in\argmin_{n^*\in\goalset} \fltsmon(n^*)$
and the search loss is bounded by
\begin{align}\label{eq:upperboundsolution}
\Llts(\ouralg, n^*)  \leq \frac{\glts(n^*)}{\pol(n^*)}\jltsmon(n^*)\sum_{n\in\leafset_\flts(n^*)} \frac{\pol(n)}{\jltsmon(n)}\,.
\end{align}
\end{theorem}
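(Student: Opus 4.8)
The proof naturally splits into two parts: (i) \ouralg{} halts and returns a solution node $n^*$ of minimal $\fltsmon$-value, and (ii) the loss bound \cref{eq:upperboundsolution}. The central tool is the monotonicity fact already recorded in the text: since BFS extracts nodes in nondecreasing order of $\flts$ and never expands a node before its parent, any node $n$ is expanded before every node $n'$ with $\fltsmon(n') > \fltsmon(n)$. A second ingredient is that $\nodeset_\flts(n^*)$ is a finite subtree rooted at $\rootnode$: it is closed under taking parents because $\fltsmon$ is monotone nondecreasing along paths by construction; and it is finite because $\fltsmon(n) \ge \flts(n) \ge \glts(n)$ (using $\jlts(\cdot) \ge 1$, $\pol(\cdot) \le 1$, with the cases $\pol(n)=0$ and $\glts(n)=0$ immediate), so every node in it has path loss at most $\fltsmon(n^*)$, and no infinite path has finite path loss (assuming finite branching, so that a König-type argument applies).

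For part (i), assume some solution node has finite $\fltsmon$-value, since otherwise the stated bound is the vacuous $\infty \le \infty$; let $v^* = \min_{m\in\goalset}\fltsmon(m)$, attained at $m_0$. I would argue that \ouralg{} never extracts a node with $\flts$-value exceeding $v^*$ before returning: to extract such a node, all its ancestors must already have been expanded, and whenever this would happen the shallowest not-yet-expanded ancestor of $m_0$ is also in the queue, with $\flts$-value at most $\fltsmon(m_0)=v^*$, and would be extracted first. Hence before returning, \ouralg{} only expands nodes of $\nodeset_\flts(m_0)$, a finite set containing the solution $m_0$, so it halts after finitely many steps at some solution node $n^*$. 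Being the first solution expanded, $n^*$ has minimal $\fltsmon$-value by the monotonicity fact, i.e. $n^*\in\argmin_{m\in\goalset}\fltsmon(m)$ and $\fltsmon(n^*)=v^*$.

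For part (ii), let $\visited$ be the set of nodes expanded by \ouralg{} up to and including $n^*$; it is closed under taking parents and contains $n^*$. The monotonicity fact gives $\visited\subseteq\nodeset_\flts(n^*)$, since an expanded node with $\fltsmon > \fltsmon(n^*)$ would force $n^*$ to be expanded earlier, contradicting that \ouralg{} halts upon expanding $n^*$. Since $\nodeset_\flts(n^*)$ is a finite subtree, every one of its nodes — hence every node of $\visited$ — is an ancestor of some $\ell\in\leafset_\flts(n^*)$ (descend through children while staying in $\nodeset_\flts(n^*)$); in particular $n^*\in\leafset_\flts(n^*)$ as solution nodes have no children. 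Therefore, by nonnegativity of $\llts$ and $\glts(\ell)=\sum_{m\in\ancn(\ell)}\llts(m)$,
\begin{align*}
\Llts(\ouralg, n^*) \;=\; \sum_{n\in\visited}\llts(n) \;\le\; \sum_{\ell\in\leafset_\flts(n^*)}\ \sum_{m\in\ancn(\ell)}\llts(m) \;=\; \sum_{\ell\in\leafset_\flts(n^*)}\glts(\ell)\,.
\end{align*}
For each $\ell\in\leafset_\flts(n^*)$, the definition of $\jltsmon$ gives $\glts(\ell)=\fltsmon(\ell)\,\pol(\ell)/\jltsmon(\ell)\le\fltsmon(n^*)\,\pol(\ell)/\jltsmon(\ell)$ (the degenerate $\glts(\ell)=0$ case is trivial, and $\pol(\ell)=0$ cannot occur inside $\nodeset_\flts(n^*)$); substituting and rewriting $\fltsmon(n^*)=\jltsmon(n^*)\glts(n^*)/\pol(n^*)$ yields exactly \cref{eq:upperboundsolution}.

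I expect the main obstacle to be part (i): carefully establishing termination and the order of extractions, together with the edge cases $\pol(n)=0$, $\glts(n)=0$, ties in the priority queue, and the possibility that all solution nodes have infinite $\fltsmon$-value. Once $\visited\subseteq\nodeset_\flts(n^*)$ and the leaf-covering property of the finite subtree $\nodeset_\flts(n^*)$ are in hand, part (ii) is just the short counting argument displayed above.
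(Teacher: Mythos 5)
Your part (ii) is exactly the paper's argument: bound the expanded set by $\nodeset_\flts(n^*)$, cover it by the ancestor sets of $\leafset_\flts(n^*)$, and use $\glts(\ell)=\fltsmon(\ell)\,\pol(\ell)/\jltsmon(\ell)\le\fltsmon(n^*)\,\pol(\ell)/\jltsmon(\ell)$, so the proposal is correct and takes essentially the same route. Your part (i) merely supplies the termination and minimal-$\fltsmon$ return claims that the paper asserts implicitly (under the mild extra finite-branching assumption you note), which is a welcome but not substantively different addition.
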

\begin{proof}
Slightly abusing notation, for a set of nodes $\nodeset'$, define $\Llts(\nodeset') = \sum_{n\in\nodeset'} \llts(n)$
to be the cumulative loss over the nodes in $\nodeset'$.
Since $\fltsmon$ is non-decreasing from parent to child, $\nodeset_\flts(n)$ forms a tree rooted in $\rootnode$ and therefore all the nodes in $\nodeset_\flts(n)$ are expanded by BFS($\rootnode$, $\flts$)
before any other node not in $\nodeset_\flts(n)$.
Therefore, $\Llts(\ouralg, n) \leq \Llts(\nodeset_\flts(n))$.
Then,
\begin{align*}
\Llts(\nodeset_\flts(n))&= L\Big(\bigcup_{n'\in\leafset_\flts(n)} \ancn(n')\Big)
\leq \sum_{n'\in\leafset_\flts(n)} \Llts(\ancn(n')) \\
&=\sum_{n'\in\leafset_\flts(n)} \glts(n')
\leq \fltsmon(n)\sum_{n'\in\leafset_\flts(n)} \frac{\pol(n')}{\jltsmon(n')}
\end{align*}
where the last inequality follows from $\jltsmon(n')\frac{\glts(n')}{\pol(n')}=\fltsmon(n')\leq \fltsmon(n)$.
Finally, since this is true for any $n$, the result holds for the returned
$n^*\in\goalset$.
\end{proof}

We can derive a first simpler result:

\begin{corollary}[\ouralg{} upper bound with no heuristic]\label{cor:goverpol}
From \cref{thm:upper_bound}, if furthermore $\forall n, \jlts(n)=1$ then
\begin{align}\label{eq:goverpol}
\Llts(\ouralg, n^*) \leq \frac{\glts(n^*)}{\pol(n^*)}\,.
\end{align}
\end{corollary}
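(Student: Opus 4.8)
The plan is to instantiate Theorem~\ref{thm:upper_bound} with $\jlts \equiv 1$ and then show that the leftover sum over $\leafset_\flts(n^*)$ is at most $1$.

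First I would observe that when $\jlts(n)=1$ for all $n$, the evaluation function is $\flts(n) = \glts(n)/\pol(n)$, and this is monotone non-decreasing from parent to child: for a child $n'$ of $n$ we have $\glts(n') = \glts(n)+\llts(n') \geq \glts(n) \geq 0$ while $0 \leq \pol(n') \leq \pol(n)$, hence $\flts(n') \geq \flts(n)$ (this is exactly the monotonicity of \levints{}'s evaluation function, here generalized to arbitrary losses). Therefore $\fltsmon(n) = \max_{n'\in\ancn(n)}\flts(n') = \flts(n)$ for every node, and consequently $\jltsmon(n) = \fltsmon(n)\pol(n)/\glts(n) = \flts(n)\pol(n)/\glts(n) = \jlts(n) = 1$ (and $\jltsmon(n)=\jlts(n)=1$ when $\glts(n)=0$). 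Substituting $\jltsmon \equiv 1$ into \eqref{eq:upperboundsolution} directly gives $\Llts(\ouralg, n^*) \leq \frac{\glts(n^*)}{\pol(n^*)}\sum_{n\in\leafset_\flts(n^*)}\pol(n)$.

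It then remains to argue $\sum_{n\in\leafset_\flts(n^*)}\pol(n) \leq 1$. The set $\leafset_\flts(n^*)$ is prefix-free (an antichain in the tree): if two of its elements were nested, the child of the shallower one lying on the path to the deeper one would belong to $\nodeset_\flts(n^*)$, contradicting the definition of a leaf of $\nodeset_\flts(n^*)$. For any node $m$ and any finite antichain $A$ with $A\subseteq\descn(m)$ one has $\sum_{n\in A}\pol(n)\leq\pol(m)$: this I would prove by induction on $|A|$, splitting $A$ at the deepest common ancestor of its elements and using $\sum_{n'\in\children(m)}\pol(n')=\pol(m)\sum_{n'\in\children(m)}\pol(n'\mid m)\leq\pol(m)$. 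Taking $m=\rootnode$ (so $\pol(m)=1$) and noting $\sum_{n\in\leafset_\flts(n^*)}\pol(n)$ is the supremum of such finite partial sums yields the claim and hence \eqref{eq:goverpol}. The only non-routine step is this last Kraft-type inequality for the antichain of leaves; collapsing $\fltsmon$ to $\flts$ and $\jltsmon$ to $1$ and substituting into Theorem~\ref{thm:upper_bound} is pure bookkeeping.
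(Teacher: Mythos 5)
Your proof is correct and takes essentially the same route as the paper: instantiate \cref{thm:upper_bound} with $\jlts\equiv 1$, note that $\flts=\glts/\pol$ is then monotone non-decreasing so $\fltsmon=\flts$ and $\jltsmon\equiv 1$, and bound $\sum_{n\in\leafset_\flts(n^*)}\pol(n)\leq 1$. The only difference is that the paper simply cites this last Kraft-type inequality from Theorem~3 of \citet{orseau2018policy}, whereas you prove it directly via the antichain/induction argument (and you also make explicit the collapse $\jltsmon\equiv 1$ that the paper leaves implicit), which is a sound, self-contained filling-in of the same step.
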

\begin{proof}
Follows from \cref{thm:upper_bound} and $\sum_{n'\in\leafset_\flts(n)}\pol(n')\leq 1$ for all $n$~\citep[Theorem 3]{orseau2018policy}.
\end{proof}

The bound in \cref{eq:goverpol} is similar to the bound provided for \levints{}
(and equal when $\llts(n)=1$ everywhere)
and shows the effect of the quality of the policy on the cumulative loss during the search: Since necessarily $\Llts(\ouralg, n^*) \geq \glts(n^*)$, 
the bound says that the search becomes more efficient as $\pol(n^*)$ gets closer to 1.
Conversely, with an uninformed policy $\pol$ the probability decreases exponentially with the depth, which means that the search becomes essentially a breadth-first search.
The bound in \cref{eq:upperboundsolution} furthermore shows the effect of the heuristic function where the additional factor can be interpreted as the ratio of the heuristic value $\jltsmon(n^*)$ at the solution to the (preferably larger) average heuristic value at the leaves of the search tree when reaching $n^*$.
If the heuristic is good, then this bound can substantially improve upon \cref{eq:goverpol}---but it can also degrade with a poor heuristic.

\begin{example}\label{ex:phs_bound}
Consider a binary tree where the single solution node $n^*$ is placed at random at depth $d$.
Assume that $\pol(n'|n)=1/2$ for both children $n'$ of any node $n$.
Assume that $\jlts(n)=\infty$ for all nodes except $\jlts(n)=1$ for the nodes on the path from the root to $n^*$, which makes \ouralg{} expand only the $d+1$ nodes on the path from the root to $n^*$.
Take $\llts(n)=1$ for all nodes.
Then \cref{eq:goverpol} tells us that the search loss (which is here the number of expanded nodes) is bounded by $(d+1)2^{d+1}$, which is correct but rather loose.
By taking the (very informative) heuristic information into account,
we have $\sum_{n\in\leafset_\flts(n^*)} \frac{\pol(n)}{\jltsmon(n)}=\pol(n^*)/\jlts(n^*)$
\comment{This is true not because $\jlts(n)=\infty$, but because $\leafset_\fltsmon(n^*) = \{n^*\}$.
Indeed, the nodes with $\jlts(n)$ do not belong to the $\nodeset_\fltsmon(n^*)$ because their
$\fltsmon(n) \geq \fltsmon(n*)$. However, for their parent $\fltsmon(\parent(n)) \leq \flts(n^*)$, but they are not leaves! Only $n^*$ is a leaf.}
and thus \cref{eq:upperboundsolution} tells us that the search loss is bounded
by $\glts(n^*)=d+1$, which is tight.
\end{example}
\todo{Another example where the heuristic value is taken into account}

The following lower bound is close to the bound of \cref{eq:goverpol} in general,
and applies to any algorithm, whether it makes use of the policy or not.
First, we say that a policy $\pol$ is \emph{proper} if for all nodes $n$, $\sum_{n'\in\children(n)}\pol(n'|n)=1$.

\begin{theorem}[Lower bound]\label{thm:lower_bound}
For every proper policy $\pol$ and non-negative loss function $\llts$ such that $\llts(\rootnode)=0$,
for every search algorithm $S$---that first expands the root and subsequently expands
only children of nodes that have been expanded---there are trees rooted in $\rootnode$
and sets of solution nodes $\goalset$ such that
\begin{align*}
    \min_{n^*\in\goalset}
    \Llts(S, n^*) \geq \frac{\glts(\hat{n}^*)}{\pol(n^*)},
    \quad\text{with } \hat{n}^* = \parent(\parent(n^*))\,.
\end{align*}
\end{theorem}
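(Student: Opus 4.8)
The plan is to construct, for a given proper policy $\pol$ and loss function $\llts$ with $\llts(\rootnode)=0$, an adversarial family of trees in which any search algorithm $S$ of the allowed form is forced to incur large search loss before it can reach any solution. The key idea is that the adversary, which knows $S$'s deterministic expansion order but can place the solution set $\goalset$ afterwards, should make $S$ expand essentially all the nodes in a ``heavy'' subtree — the set of nodes whose $\pol$-probability is at least roughly $\pol(n^*)$ — before revealing where a solution lies.

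First I would fix a target leaf-to-be $n^*$ that $S$ expands late, and consider the frontier of nodes $n$ with $\pol(n) \ge \pol(n^*)$; since $\pol$ is proper, $\sum$ of probabilities over any antichain is $1$, so this frontier has at least $1/\pol(n^*)$ leaves, and the tree of such nodes has cumulative path loss governed by $\glts$. Concretely, I would build a tree where from $\rootnode$ there is essentially a single long path down to a node at which the policy ``fans out'' so that the collection of deep nodes all have probability exactly $\pol(n^*)$; I would make the solution set consist of nodes hanging just below that fan-out layer. Because $S$ must expand a node's parent before the node, and because the adversary can permute which of the fanned-out branches actually contains the (unique or first) solution to be the last one $S$ would try, $S$ is forced to expand the grandparent $\hat n^* = \parent(\parent(n^*))$ and then traverse essentially the whole fan before hitting a solution. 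Summing $\llts$ over those forced expansions, and using that each branch repeats the path-loss structure down to $\glts(\hat n^*)$ scaled appropriately, yields a total of at least $\glts(\hat n^*)\cdot(1/\pol(n^*))$, i.e. $\glts(\hat n^*)/\pol(n^*)$.

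The reason the bound features $\hat n^* = \parent(\parent(n^*))$ rather than $n^*$ itself is a boundary effect: the adversary can hide the solution one or two levels below the point up to which $S$ is genuinely forced to explore, so we only get a clean lower bound in terms of the path loss at the grandparent, not at $n^*$. I would make this precise by noting $\llts(\rootnode)=0$ (so the root's expansion is free and does not help $S$) and by arranging the last two levels to carry negligible or appropriately-accounted loss, so that $\glts(\hat n^*)$ is the quantity that survives the argument. A clean way to organize this is an exchange/adversary argument: run $S$ on the ``full'' tree (no solutions), record the order in which it expands the fan-out leaves, let $n^*$ be a child of the last such leaf expanded, and set $\goalset$ to be exactly the children of that last leaf; then $\min_{n^*\in\goalset}\Llts(S,n^*)$ is at least the loss accumulated over the whole fan plus the stem, which is $\ge \glts(\hat n^*)/\pol(n^*)$.

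The main obstacle I anticipate is getting the geometry of the adversarial tree exactly right so that (i) $\pol$ is genuinely proper on it, (ii) every fanned-out branch has probability precisely $\pol(n^*)$ so that there are exactly $1/\pol(n^*)$ of them (or at least $\lceil 1/\pol(n^*)\rceil$, handling non-reciprocal values of $\pol(n^*)$ by a limiting or rounding argument), and (iii) the per-branch cumulative loss, summed over the forced expansions, telescopes to exactly $\glts(\hat n^*)$ times the branch count rather than something off by a level. Handling arbitrary real $\pol(n^*)\in(0,1]$ and arbitrary non-negative $\llts$ simultaneously — in particular allowing infinite or highly non-uniform losses without breaking the ``no infinite path has finite path loss'' assumption — is where the construction needs the most care; I would likely first prove it for the uniform-branching, unit-loss case (recovering the \levints-style bound $d_0/\pol$) and then argue the general case by embedding the desired $\glts$-values as edge losses along the stem and pushing the fan-out to the appropriate depth.
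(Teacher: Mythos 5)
There is a genuine gap, and it lies in the quantifier structure. The theorem fixes the policy $\pol$ and the loss $\llts$ \emph{first} (``for every proper policy and non-negative loss with $\llts(\rootnode)=0$''); the adversary only gets to choose the tree shape and the placement of $\goalset$. Your construction, however, needs to dictate both: you want the fan-out branches to ``have probability exactly $\pol(n^*)$'' and you propose ``arranging the last two levels to carry negligible or appropriately-accounted loss'' and ``embedding the desired $\glts$-values as edge losses along the stem.'' None of that is available to you. Worse, the geometry itself fails for arbitrary given losses: with a single shared stem ending at a fan-out node $v$ and solutions hanging one level below the fan, the grandparent is $\hat n^*=v$, so the target is $\glts(v)/\pol(n^*)$, while the loss the algorithm is actually forced to pay is only $\glts(v)$ (the stem, paid once) plus $\sum_i\llts(u_i)$ over the fan nodes. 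If the given $\llts$ assigns small or zero loss to the fan nodes, this is nowhere near $\glts(v)/\pol(n^*)$; the stem loss does not get multiplied by the number of branches because it is shared. Your counting argument (``at least $1/\pol(n^*)$ branches, each contributing $\approx\glts(\hat n^*)$'') silently assumes you can make each branch carry its own copy of the stem loss, which is exactly what a shared stem prevents.

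The paper's proof avoids both problems by branching \emph{at the root} into $m$ single-child chains, so that whatever conditional probabilities (summing to $1$ by properness) and whatever losses the given $\pol$ and $\llts$ assign, each branch $i$ accumulates its own path loss $\glts(\hat n_i)$ at its deepest expanded node $\hat n_i$. The key step is then not a count of branches but a weighted-average argument: since $\sum_i\pol(\hat n_i)=1$, one has $\sum_i\glts(\hat n_i)\geq\min_i \glts(\hat n_i)/\pol(\hat n_i)$, and the solution is placed in the minimizing branch, at the \emph{grandchild} of its last expanded node (the child may already have been generated and tested with \issolution, which is also the real reason $\parent(\parent(n^*))$ appears--your ``boundary effect'' intuition is close, but the placement must be two levels down in the chosen branch, not ``children of the last leaf expanded''). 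Because chains have conditional probability $1$, $\pol(n^*)=\pol(\hat n^*)$, which converts the minimum into the stated bound with no rounding of $1/\pol(n^*)$ and no need to handle non-reciprocal probabilities at all. To repair your write-up you would essentially have to move the fan to the root and replace the branch count by this weighted minimum, at which point it becomes the paper's argument.
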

\begin{proof}
Consider the following infinite tree:
The root $\rootnode$ has $m$ children, $n_{1, 1}, n_{2, 1} \ldots n_{m, 1}$
and each child $n_{i, 1}$ is assigned an arbitrary conditional probability $\pol(n_{i, 1}|\rootnode)\geq 0$
such that $\sum_{i\in[m]}\pol(n_{i, 1}|\rootnode) = 1$.
Each node $n_{i, j}$ has a single child $n_{i, j+1}$ with $\pol(n_{i, j+1}|n_{i, j})=1$,
and is assigned an arbitrary loss $\llts(n_{i, j}) \geq 0$.
Observe that $\pol(n_{i, j}) = \pol(n_{i, 1}|\rootnode)$.

Now let the given search algorithm expand nodes in any order, as long as parents are always
expanded before their children,
until at least one node is expanded in each branch with positive probability---if this is not met then the bound holds trivially.
Then stop the search after any finite number of steps.
Let $\hat{n}_i$ be the last expanded node in each branch $i\in[m]$.
Then pick the node $\hat{n}^*$ among the $\hat{n}_i$ with smallest $\glts(\hat{n}_i)/\pol(\hat{n}_i)$.
Since this node $\hat{n}^*$ has been expanded, its unique child $\hat{n}'$ may have already been tested for solution, but not its grand-child since $\hat{n}'$ has not yet been expanded.
So we set $n^*$ to be the unique child of $\hat{n}'$, and set $\goalset=\{n^*\}$.
For each branch $i$ we have
$\glts(\hat{n}_i)/\pol(\hat{n}_i) \geq \glts(\hat{n}^*)/\pol(\hat{n}^*)$.
Therefore, recalling that the policy is proper,
the cumulative loss before testing if $n^*$ is a solution is at least
\begin{align*}
    \sum_{i\in[m]} g(\hat{n}_i) \geq \sum_{i\in[m]} \pol(\hat{n}_i) \frac{\glts(\hat{n}^*)}{\pol(\hat{n}^*)} = 
    \frac{\glts(\hat{n}^*)}{\pol(\hat{n}^*)} =
    \frac{\glts(\hat{n}^*)}{\pol(n^*)}\,.
    &\qedhere
\end{align*}
\end{proof}

\subsection{Admissible Heuristics}

We say that $\jlts$ is \emph{\ouralg-admissible} (by similarity to admissibility of heuristic functions for A*)
if for all nodes $n$, for all solution nodes $n^*$ below $n$
(\ie $n^*\in\descn(n)\cap\goalset$), we have 
$\flts(n) \leq \flts(n^*)$ and $\jlts(n^*)=1$, that is, $\flts(n^*) = \glts(n^*)/\pol(n^*)$.
This means that $\flts(n)$ always underestimates the cost of any descendant solution.
This ensures that for solution nodes, $\fltsmon(n^*) =\flts(n^*) = \glts(n^*)/\pol(n^*)$.
Note that we may still not have $\fltsmon(n) =\flts(n)$ for non-solution nodes.
Also observe that taking $\jlts(n) = 1$ for all $n$ is admissible, but not informed, similarly to $h(n)=0$ in A*.

Hence, \emph{ideally}, if $\jlts(n)= \infty$ when $\descn(n)\cap\goalset=\emptyset$,
\begin{align*}
\text{and }\jlts(n) &= \min_{n*\in\descn(n)\cap\goalset}\frac{\glts(n^*)/\pol(n^*)}{\glts(n)/\pol(n)}, \\
\text{then }\flts(n) &= \min_{n*\in\descn(n)\cap\goalset}\glts(n^*)/\pol(n^*)
\end{align*}
and thus, similarly to A*, \ouralg{} does not expand any node which does not lead to a solution node of minimal $\flts$-value.
When the heuristic is \ouralg-admissible but not necessarily ideal, we provide a refined bound of \cref{eq:upperboundsolution}:

\begin{corollary}[Admissible upper bound]\label{cor:boundoptimist}
If $\jlts$ is \ouralg-admissible, then the cumulative loss incurred by \ouralg{} before returning a solution node $n^*\in\argmin_{n^*\in\goalset}\fltsmon(n^*)$ is upper bounded by
\begin{align}\label{eq:admissiblebound}
\Llts(\ouralg, n^*)
\leq \frac{\glts(n^*)}{\pol(n^*)}\underbrace{\sum_{n\in\leafset_\flts(n^*)} \frac{\pol(n)}{\jltsmon(n)}}_{\Sigma}\,.
\end{align}
\end{corollary}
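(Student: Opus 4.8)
The plan is to derive \cref{cor:boundoptimist} directly from \cref{thm:upper_bound} by showing that \ouralg-admissibility forces the leading factor $\jltsmon(n^*)$ in \cref{eq:upperboundsolution} to equal $1$ for the solution node actually returned, at which point \cref{eq:upperboundsolution} becomes verbatim \cref{eq:admissiblebound}.

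First I would record what \cref{thm:upper_bound} already gives for free: \ouralg{} returns some $n^*\in\argmin_{n^*\in\goalset}\fltsmon(n^*)$ and its search loss is at most $\frac{\glts(n^*)}{\pol(n^*)}\jltsmon(n^*)\,\Sigma$ with $\Sigma=\sum_{n\in\leafset_\flts(n^*)}\pol(n)/\jltsmon(n)$. So the whole corollary reduces to the claim that $\jltsmon(n^*)=1$ for this particular returned $n^*$.

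The heart of the argument is then the following short computation. For every ancestor $n'\in\ancn(n^*)$ we have $n^*\in\descn(n')\cap\goalset$, so \ouralg-admissibility gives $\flts(n')\leq\flts(n^*)$; since $n^*\in\ancn(n^*)$ as well, maximizing over $n'\in\ancn(n^*)$ yields $\fltsmon(n^*)=\flts(n^*)$. Admissibility also states $\jlts(n^*)=1$, \ie $\flts(n^*)=\glts(n^*)/\pol(n^*)$, hence $\fltsmon(n^*)=\glts(n^*)/\pol(n^*)$. When $\glts(n^*)>0$ this rearranges to $\jltsmon(n^*)=\fltsmon(n^*)\pol(n^*)/\glts(n^*)=1$; when $\glts(n^*)=0$ we get $\jltsmon(n^*)=\jlts(n^*)=1$ directly from the definition of $\jltsmon$. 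Substituting $\jltsmon(n^*)=1$ into the bound from \cref{thm:upper_bound} gives \cref{eq:admissiblebound}.

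I do not expect a real obstacle: the corollary is essentially a one-line consequence of \cref{thm:upper_bound} together with the observation (already noted just before the corollary) that $\fltsmon(n^*)=\flts(n^*)=\glts(n^*)/\pol(n^*)$ for solution nodes under \ouralg-admissibility. The only spot that needs a moment's care is the degenerate case $\glts(n^*)=0$, where one must invoke the stated convention $\jltsmon(n)=\jlts(n)$ when $\glts(n)=0$ rather than the formula $\jltsmon(n)=\fltsmon(n)\pol(n)/\glts(n)$.
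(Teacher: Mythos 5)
Your proposal is correct and follows essentially the same route as the paper: the paper's proof simply invokes \cref{thm:upper_bound} together with the fact (stated just before the corollary) that \ouralg-admissibility forces $\fltsmon(n^*)=\flts(n^*)=\glts(n^*)/\pol(n^*)$, i.e.\ $\jltsmon(n^*)=1$, which is exactly your argument. You merely spell out the ancestor-maximization and the $\glts(n^*)=0$ convention in more detail than the paper's one-line proof, which is fine.
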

\begin{proof}
Follows from \cref{eq:upperboundsolution} with $\fltsmon(n^*)=\flts(n^*)$
due to the \ouralg-admissibility of $\jlts$.
\end{proof}

\cref{cor:boundoptimist} offers a better insight into the utility of a heuristic factor $\jlts$, in particular when it is \ouralg-admissible.
First, observe that $\sum_{n'\in\leafset_\flts(n)} \pol(n') \leq 1$, which means that the sum term
$\Sigma$ can be interpreted as an average.
Second, since $\jltsmon(\cdot) \geq 1$, then necessarily $\Sigma \leq 1$,
which means that \cref{cor:boundoptimist} is a strict improvement over \cref{eq:goverpol}.
$\Sigma$ can thus be read as the average search reduction factor at the leaves of the search tree
when finding node $n^*$.
\cref{cor:boundoptimist} shows that using a \ouralg-admissible heuristic factor $\jlts$ can help the search, on top of the help that can be obtained by using a good policy.
In light of this, we can now interpret $\jltsmon(n^*)$ in \cref{eq:upperboundsolution} as
an \emph{excess estimate} for inadmissible $\jlts$, and a trade-off appears between this excess and the potential gain in the $\Sigma$ term by the heuristic.
Interestingly, this trade-off disappears when the heuristic factor is \ouralg-admissible, that is, the bounds suggest that using a \ouralg-admissible heuristic is essentially `free.'

A number of traditional problems have readily available admissible heuristics for A*.
We show that these can be used in \ouralg{} too to define a \ouralg-admissible $\jlts$.
Let $h$ be a heuristic for A*. 
Making the dependency on $h$ explicit, define
\begin{align*}
    \jlts_h(n) = \frac{\glts(n)+h(n)}{\glts(n)}\,, \quad
    \text{then } \flts_h(n) = \frac{\glts(n) + h(n)}{\pol(n)}\,.
\end{align*}
\begin{theorem}[A*-admissible to \ouralg-admissible]
If $h$ is an admissible heuristic for A*, then $\jlts_h$ is \ouralg-admissible.
\end{theorem}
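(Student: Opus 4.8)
The plan is to verify the two defining clauses of $\ouralg$-admissibility for $\jlts_h$ directly, using only the A* admissibility of $h$ together with the multiplicativity of $\pol$ along paths. Recall that $\flts_h(n) = (\glts(n)+h(n))/\pol(n)$, so what must be shown is that, for every node $n$ and every solution node $n^*\in\descn(n)\cap\goalset$, we have (i) $\jlts_h(n^*)=1$ and (ii) $\flts_h(n)\le\flts_h(n^*)$.

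For (i) I would apply the admissibility hypothesis to $n^*$ viewed as its own descendant: since $n^*\in\descn(n^*)\cap\goalset$, the quantity $\glts(n^*)+h(n^*)$ lower-bounds the cost of the least-$\glts$-cost solution below $n^*$, which is at most $\glts(n^*)$; hence $h(n^*)\le 0$, and with $h\ge0$ this forces $h(n^*)=0$. Consequently $\jlts_h(n^*)=(\glts(n^*)+0)/\glts(n^*)=1$ and $\flts_h(n^*)=\glts(n^*)/\pol(n^*)$. (In the degenerate case $\glts(n^*)=0$ one still gets $h(n^*)=0$ and $\flts_h(n^*)=0$ by the same computation read through $\flts_h$, so nothing changes.)

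For (ii) I would chain two facts. First, admissibility applied at $n$ gives $\glts(n)+h(n)\le\glts(n^*)$. Second, since $n$ is an ancestor of $n^*$, multiplicativity of $\pol$ along the path from $n$ to $n^*$, with every conditional probability lying in $[0,1]$, gives $\pol(n^*)\le\pol(n)$. As all terms are non-negative, dividing the first inequality by $\pol(n)>0$ and then replacing $\pol(n)$ by the smaller $\pol(n^*)$ in the denominator yields
\[
\flts_h(n)=\frac{\glts(n)+h(n)}{\pol(n)}\le\frac{\glts(n^*)}{\pol(n)}\le\frac{\glts(n^*)}{\pol(n^*)}=\flts_h(n^*),
\]
which is clause (ii); and if $\pol(n)=0$ then $\pol(n^*)=0$ as well, so both sides are $\infty$ and the inequality is trivial.

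I do not expect a genuine obstacle here: the statement is essentially the observation that A* admissibility of $h$ survives division by $\pol$, because $1/\pol$ is non-decreasing from an ancestor to a descendant. The only points requiring a little care are the boundary conventions ($\glts=0$, $\pol=0$) and the use of the inclusive $\descn$, which is what makes $h(n^*)=0$ fall out of the same hypothesis rather than needing a separate assumption.
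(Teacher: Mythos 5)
Your proof is correct and follows essentially the same route as the paper's: admissibility gives $\glts(n)+h(n)\le\glts(n^*)$, and then $\flts_h(n)\le\glts(n^*)/\pol(n)\le\glts(n^*)/\pol(n^*)=\flts_h(n^*)$ since $\pol$ is non-increasing along the path. Your explicit derivation of $h(n^*)=0$ (hence $\jlts_h(n^*)=1$) and the handling of the $\pol=0$ and $\glts(n^*)=0$ boundary cases are details the paper leaves implicit, but they do not change the argument.
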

\begin{proof}
Since $h$ is admissible for A*, we have $\glts(n) + h(n) \leq \glts(n^*)$ for any solution node $n^*$ descending from $n$.
Hence $\flts_h(n) \leq \glts(n^*)/\pol(n) \leq \glts(n^*)/\pol(n^*) = \flts_h(n^*)$
and thus $\jlts_h$ is \ouralg-admissible.
\end{proof}

\newcommand{\hltsmon}{\hlts^+}
Define $\hltsmon(n) = \pol(n) \max_{n'\in\ancn(n)} \flts_h(n') - \glts(n)\geq \hlts(n)$
which is such that $\fltsmon_h(n) = \max_{n'\in\ancn(n)}\flts_h(n')$  is monotone non-decreasing. Then $\jltsmon_h(n) = 1+\hltsmon(n)/\glts(n)$.

\begin{corollary}[Admissible upper bound with $h$]\label{cor:boundoptimisth}
Given a heuristic function $h$,
if $\jlts=\jlts_h$ is \ouralg-admissible, then the cumulative loss incurred before 
returning a solution node $n^*\in\argmin_{n^*\in\goalset} \fltsmon(n^*)$ is upper bounded by
\begin{align*}
\Llts(\ouralg, n^*)
\leq \frac{\glts(n^*)}{\pol(n^*)}\sum_{n\in\leafset_\flts(n^*)}  \frac{\pol(n)}{1+\hltsmon(n)/\glts(n)}\,.
\end{align*}
\end{corollary}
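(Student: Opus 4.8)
The plan is to derive the statement as an immediate specialization of \cref{cor:boundoptimist} followed by a one-line substitution. First I would invoke the immediately preceding theorem: since $h$ is admissible for A*, the heuristic factor $\jlts_h$ is \ouralg-admissible, so \cref{cor:boundoptimist} applies verbatim with $\jlts = \jlts_h$ and $\flts = \flts_h$, giving $\Llts(\ouralg, n^*) \leq \frac{\glts(n^*)}{\pol(n^*)}\sum_{n\in\leafset_\flts(n^*)} \pol(n)/\jltsmon_h(n)$. It then only remains to rewrite the summand $1/\jltsmon_h(n)$ in terms of $h$.

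For that rewriting I would unfold definitions. Recall $\jltsmon(n) = \fltsmon(n)\,\pol(n)/\glts(n)$ (with the convention $\jltsmon(n) = \jlts(n)$ when $\glts(n)=0$), and that for the choice $\jlts = \jlts_h$ the monotone envelope is $\fltsmon_h(n) = \max_{n'\in\ancn(n)}\flts_h(n')$. Hence $\jltsmon_h(n) = \frac{\pol(n)}{\glts(n)}\max_{n'\in\ancn(n)}\flts_h(n')$, and plugging in the definition $\hltsmon(n) = \pol(n)\max_{n'\in\ancn(n)}\flts_h(n') - \glts(n)$ gives $\jltsmon_h(n) = (\glts(n)+\hltsmon(n))/\glts(n) = 1 + \hltsmon(n)/\glts(n)$ --- exactly the identity displayed just before the corollary. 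Substituting this into the bound from the first paragraph yields the claim.

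The step I expect to be the main ``obstacle'' is really just the bookkeeping in this unfolding, and it is light: all the substantive facts --- that $\jlts_h$ is \ouralg-admissible, and that $\hltsmon$ is the correction that makes $\fltsmon_h$ monotone non-decreasing with $\hltsmon(n) \geq \hlts(n)$ --- have already been established in the surrounding text. The only point worth a sentence of care is the degenerate case $\glts(n)=0$ (which cannot occur when $\llts \equiv 1$, since then $\glts(n) = d_0(n) \geq 1$): there one reads $\jltsmon_h(n) = \jlts_h(n)$ by the stated convention and interprets $1 + \hltsmon(n)/\glts(n)$ consistently with $\flts_h(n) = (\glts(n)+h(n))/\pol(n)$, so the inequality remains valid as written.
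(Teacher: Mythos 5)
Your proposal is correct and follows essentially the same route as the paper, whose proof is exactly ``apply \cref{cor:boundoptimist} and substitute the identity $\jltsmon_h(n) = 1+\hltsmon(n)/\glts(n)$'' that is stated just before the corollary. The only superfluous step is your opening appeal to A*-admissibility of $h$: the corollary's hypothesis already grants that $\jlts_h$ is \ouralg-admissible, which is all that \cref{cor:boundoptimist} needs.
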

\begin{proof}
Follows by \cref{cor:boundoptimist} and the definition of $\jltsmon_h$.
\end{proof}

\cref{cor:boundoptimisth} shows that the larger $h(n)$ (while remaining admissible), 
the smaller the sum, and thus the smaller the bound.
A well tuned heuristic can help reduce the cumulative loss by a large factor compared to the policy alone.

We call \ouralgh{} the variant of \ouralg{} that takes an A*-like heuristic function $h$
and uses $\jlts=\jlts_h$.

\subsection{A More Aggressive Use of Heuristics}

Similarly to A*, the purpose of the heuristic factor is to estimate the $g$-cost of the least-cost descendant solution node.
But even when $h$ is admissible and accurate, $\jlts_h$ is often not an accurate estimate of the cumulative loss at $n^*$ due to missing the ratio $\pol(n)/\pol(n^*)$---and quite often $\pol(n^*) \ll \pol(n)$.
Hence if $h$ is the only known heuristic information, we
propose an estimate $\jltshat_t$ that should be substantially more accurate 
if conditional probabilities and losses are sufficiently regular on the path to the solution,
by approximating $\pol(n^*)$  as $[\pol(n)^{1/\glts(n)}]^{\glts(n) + h(n)}$:
Take $\glts(n)=d(n)$ for intuition, 
then $\pol(n)^{1/\glts(n)}=p$ is roughly the average conditional probability
along the path from the root to $n$,
and thus $p^{\glts(n) + h(n)}$ is an estimate of the probability at depth $d(n^*)$,
that is, an estimate of $\pol(n^*)$.
This gives
\begin{align}
    \jltshat_h(n) = \frac{1+h(n)/\glts(n)}{\pol(n)^{h(n)/\glts(n)}}\,,\ 
    \fltshat_h(n) = \frac{\glts(n)+h(n)}{\pol(n)^{1+h(n)/\glts(n)}}
    \label{eq:jltshat}
\end{align}
so that $\fltshat_h(n)$ is an estimate of $\glts(n^*)/\pol(n^*)$.
The drawback of $\jltshat_h$ is that it may not be \ouralg-admissible anymore,
so while \cref{thm:upper_bound} still applies, \cref{cor:boundoptimist} does not.

We call \ouralghat{} the variant of \ouralg{} that defines
$\jlts$ as in \cref{eq:jltshat} based on some given (supposedly approximately admissible)
heuristic function $h$.

\section{Learning the Policy}\label{sec:polgrad}

We consider $K$ tasks, and quantities indexed by $k\in[K]$ have the same meaning as before, but for task $k$.
We assume that the policy $\pol_\theta$ is differentiable w.r.t. its parameters $\theta\in\Theta$.
Ideally, we want to optimize the policy so as to minimize the total search loss, \ie
the optimal parameters of the policy are
\begin{align*}
    \theta^* = \argmin_{\theta\in\Theta} \sum_{k\in[K]} \min_{n^*\in{\goalset}_k}
    \Llts_k(\ouralg_\theta, n^*)\,.
\end{align*}
Unfortunately, even if we assume the existence of a differentiable close approximation $\tilde{\Llts}_k$ of $\Llts_k$,
the gradient of the sum with respect to $\theta$ can usually not be obtained.
Instead the upper bound in \cref{eq:upperboundsolution} can be used as a surrogate loss function.
However, it is not ideally suited for optimization
due to the dependence on the set of leaves $\leafset_\flts(n^*)$.
Hence we make a crude simplification of \cref{eq:upperboundsolution} and assume that for task $k$,
$\Llts_k(\ouralg_\theta, n^*_k) \approx c_k \glts(n^*_k)/\pol_\theta(n^*_k)$
for some a priori unknown constant $c_k$ assumed independent of the parameters $\theta$
of the policy: indeed
the gradient of the term in \cref{eq:upperboundsolution} that $c_k$ replaces should usually be small
since $\pol_\theta(n)$ is often an exponentially small quantity with the search depth
and thus $\grad_\theta \pol_\theta(n)\approx 0$.
Then, since $\D f(x)/\D x = f(x) \D \log f(x)/\D x$,
\begin{align}\label{eq:grad}
    \grad_\theta \tilde{\Llts}_k(\ouralg_\theta, n^*_k) &
    = \tilde{\Llts}_k(\ouralg, n^*_k) \grad_\theta \log \tilde{\Llts}_k(\ouralg_\theta, n^*_k) \notag\\
    &\approx \tilde{\Llts}_k(\ouralg_\theta, n^*_k)\grad_\theta \log \Big(c_k \frac{\glts(n^*_k)}{\pol_\theta(n^*_k)}\Big) \notag\\
    &\approx \Llts_k(\ouralg_\theta, n^*_k)\grad_\theta \log \frac{1}{\pol_\theta(n^*_k)}\,.
\end{align}
Note that $\Llts_k(\ouralg_\theta, n^*_k)$ can be calculated during the search,
and that the gradient does not depend explicitly on the heuristic function $h$,
which is learned separately.
In the experiments, we use this form to optimize the policy for \ouralg{} and \levints{}.

\section{Experiments}

We use $\llts(\cdot)=1$ everywhere, so the search loss $\Llts$
corresponds to the number of node expansions.
We test the algorithms A*, GBFS, WA* (w=1.5), PUCT (c=1), 
\ouralgh{} (using $\jlts=\jlts_h$) and \ouralghat{} (using $\jlts=\jltshat_h$),
 and \levints{}.
Each algorithm uses one neural network to model the policy and/or the heuristic
to be trained on the problems it manages to solve\forarxiv{ (architectures in \cref{sec:nnarchs})}{}.
For PUCT,
we normalize the Q-values~\citep{schrittwieser2019mastering},
and use a virtual loss~\citep{chaslot2008parallel} of unit increment with the following selection rule for a child $n'$ of a node $n$,
\begin{align*}
    &\bar{h}(n') = (h(n') + \text{virtual\_loss}(n') - h_{\min}) / (h_{\max} - h_{\min}) \\
    &\text{PUCT}(n'; n) = \bar{h}(n) - c \pol(n'|n)\frac{\sqrt{\sum_{n''\in\children(n)} N(n'')}}{1+N(n')}
\end{align*}
where $N(n)$ is the number of times the node $n$ has been visited,
$c$ is a constant set to 1 in the experiments,
and keep in mind that $h$ corresponds to losses---hence the negative sign.
The node $n'$ with minimum PUCT value is selected.
The virtual loss allows for evaluating the nodes in batch: we first collect 32 nodes for evaluation using the PUCT rule and only then evaluate all nodes in batch with the neural network. The virtual loss allows us to sample different paths of the MCTS tree to be evaluated.
Similarly, for the BFS-based algorithms, 32 nodes are evaluated in batch with the neural network before insertion in the priority queue~\citep{agostinelli2019rubik}.
This batching speeds up the search for all algorithms.

Since we want to assess the cooperation of search and learning capabilities of the different algorithms without using domain-specific knowledge, our experiments are not directly comparable with domain-specific solvers (see \citet{Pereira2016b} for Sokoban).
In particular, no intermediate reward is provided, by contrast to \citet{orseau2018policy} for example.

\subsection{Domains}

\paragraph{Sokoban (\by{10})}
Sokoban is a PSPACE-hard grid-world puzzle where the player controls an avatar who pushes boxes to particular spots (boxes cannot be pulled).
We use the first 50\,000 problems training problems and the provided 1\,000 test problems from Boxoban~\citep{boxobanlevels}.

\paragraph{The Witness (\by4)}
The Witness domain is a NP-complete puzzle extracted from the video game of the same name~\citep{abel2020witness}
and consists in finding a path on a 2D grid that separates cells of different colors.
\footnote{Datasets and code are at \url{https://github.com/levilelis/h-levin}.} 

\paragraph{Sliding Tile Puzzle (\by5)}
The sliding tile puzzle is a traditional benchmark in the heuristic search literature
where heuristic functions can be very effective~\citep{korf1985depth,felner2004pdb}.
The training set is generated with random walks from the \by5 solution state 
with walks of lengths between 50 and 1\,000 steps---this is a difficult training set
as there are no very easy problems.
Test problems are generated randomly and unsolvable problems are filtered out by a parity check; note that this is like scrambling infinitely often, which makes the test problems often harder than the training ones.

\subsection{Training and Testing}\label{sec:traintest}

Each search algorithm start with a uniform policy and/or an uninformed heuristic
and follow a variant of the Bootstrap process~\citep{ernandes2004heuristics,jabbari2011bootstrap}:
All problems are attempted with an initial search step budget (2\,000 for Witness and Sokoban, 7\,000 for the sliding tile puzzle)
and the search algorithm may solve some of these problems.
After 32 attempted problems, a parameter update pass of the models is performed,
using as many data points as the lengths of the solutions of the solved problems among the 32.
If no new problem has been solved at the end of the whole Bootstrap iteration,
the budget is doubled.
Then the next iteration begins with all problems again.
The process terminates when the total budget of 7 days is spent (wall time, no GPU). 
We use the mean squared error (MSE) loss for learning the heuristic functions: 
For a found solution node $n^*$ 
the loss for node $n\in\ancn(n^*)$ is $[(d(n^*)-d(n))-h(n)]^2$,
where $h(n)$ is the output of the network.
Note that this loss function \emph{tends} to make the heuristic admissible.
The cross-entropy loss is used to learn the policy for PUCT.
The policy for \levints{}  and our \ouralg{} variants are based on the approximate gradient of the $\tilde{\Llts}_k$ loss (see \cref{sec:polgrad}).

Testing follows the same process as training (with a total computation time of 2 days), except that parameters are not updated, and the budget is doubled unconditionally at each new Bootstrap iteration.
The test problems are not used during training.
Each algorithm is trained 5 times with random initialization of the networks.
For fairness, we test each algorithm using its trained network 
that allowed the search algorithm to solve the largest number of training problems.

\begin{figure*}[tb!]
    \centering
    \includegraphics[width=0.344\textwidth]{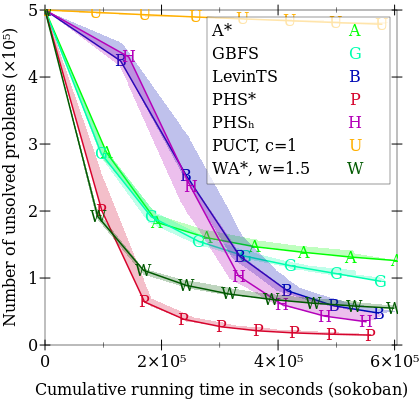}
    \includegraphics[width=0.323\textwidth]{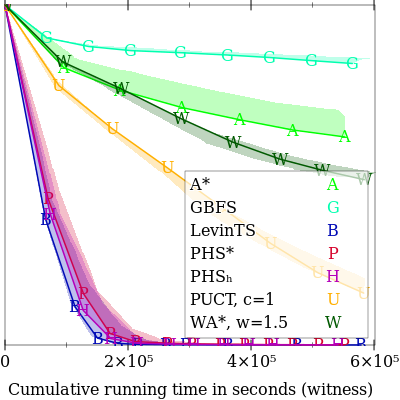}
    \includegraphics[width=0.323\textwidth]{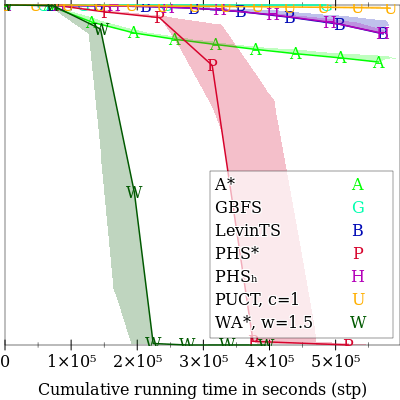}
    \caption{Learning curves using the Bootstrap process.
    Each point is a Bootstrap iteration.
    Lines correspond to the training runs with the least remaining unsolved problems at the end of the run.
    All 5 training runs per algorithm lie within the colored areas.}
    \label{fig:learning}
\end{figure*}

\subsection{Results}

The learning curves and test results are in \cref{fig:learning} and \cref{tab:test_results}\forarxiv{ (more information and results can be found in \cref{sec:learning_curves,sec:best_test_results})}{}.

\vspace{-1em}
\paragraph{Sokoban}
\ouralgh{} and \ouralghat{} obtain the best results on this domain,
showing the effectiveness of combining a policy and a heuristic function with BFS's efficient use of the priority queue.
\levints{} and WA* follow closely. 
A* takes significantly more time but finds shorter solutions, as expected.
By contrast to the other algorithms, GBFS is entirely dependent on the quality of the heuristic function and cannot compensate an average-quality heuristic with a complete search (A* and WA* also use the $g$-cost to help drive the search). 
After the training period, PUCT could not yet learn a good policy and a good heuristic from scratch: PUCT is designed for the larger class of stochastic and adversarial domains and takes a large amount of time just to expand a single new node.
Some variants of PUCT commit to an action after a fixed number of search steps (\eg \citet{racaniere2017imagination});
although once a good value function has been learned this may help with search time,
it also makes the search incomplete, which is not suitable when combining search and learning from scratch.
\citet{agostinelli2019rubik} report better results than ours for WA* (about 1\,050\,000 expansions in total on the test set), but their network has more than 30 times as many parameters, and they use 18 times as many training problems,
while also using a backward model to generate a curriculum.
For \levints{}, \citet{orseau2018policy} reported 5\,000\,000 expansions, using a larger network,
10 times as many training steps and intermediate rewards.
Our improved results for \levints{} are likely due to the cooperation of search and learning during training, allowing to gather gradients for harder problems.

\paragraph{The Witness}
This domain appears to be difficult for learning a good heuristic function,
with the policy-guided BFS-based algorithms being the clear winners. Even for \ouralg, the heuristic function does not seem to help compared to \levints{};
it does not hurt much either though, showing the robustness of \ouralg.
PUCT and WA* learned slowly during training but 
still manage to solve (almost) all test problems, at the price of many expansions.
GBFS performs poorly because it relies exclusively on the quality of the heuristic---by contrast, A* and WA* use the $g$-cost to perform a systematic search when the heuristic is not informative.

\paragraph{Sliding Tile Puzzle}
Only WA* and \ouralghat{} manage to solve all test problems.
\ouralgh{} seems to not be using the heuristic function to its full extent, by contrast to \ouralghat{}.
The trend in the learning curves (see \cref{fig:learning}\forarxiv{ and also \cref{sec:best_test_results}}{}) suggests that with more training \levints{}, \ouralgh{}, and A* would achieve better results, but both GBFS and PUCT seem to be stuck. 
Possibly the training set is too difficult for these algorithms and adding easier problems or an initial heuristic 
could help.

\newcommand{\badnum}[1]{\emph{#1}}
\newcommand{\atleast}[1]{\badnum{#1}}
\newcommand{\winner}[1]{\textbf{#1}}

\begin{table}[tb!]
\begin{center}
\begin{tabular}{lrrrr}
\toprule
\multicolumn{1}{c}{Alg.}& \multicolumn{1}{c}{Solved}  & \multicolumn{1}{c}{Length} & \multicolumn{1}{c}{Expansions} & \multicolumn{1}{c}{Time (s)} \\   
\midrule
\multicolumn{5}{c}{Sokoban \by{10} (test)} \\   
\midrule
PUCT, c=1  &     229 &      \badnum{24.9} &    \atleast{10\,021.3} & \atleast{39.7} \\
GBFS       &     914 &      \badnum{36.4} &     \atleast{5\,040.0} & \atleast{34.0} \\
A*         &     995 &      \winner{32.7} &     \atleast{8\,696.8} & \atleast{61.7} \\
WA*, w=1.5 &    \winner{1\,000} &      34.5 &     3\,729.1 & 25.5 \\
\levints{}      &    \winner{1\,000} &      40.1 &     2\,640.4 & 19.5 \\
\ouralgh{}        &    \winner{1\,000} &      39.1 &     2\,130.4 & 18.6 \\
\ouralghat{}       &    \winner{1\,000} &      37.6 &     \winner{1\,522.1} & \winner{11.3} \\
\midrule
\multicolumn{5}{c}{The Witness \by4 (test)} \\   
\midrule
GBFS       &     290 &      \badnum{13.3} &    \atleast{10\,127.9} & \atleast{44.6} \\
A*         &     878 &      \badnum{13.6} &     \atleast{9\,022.3} & \atleast{53.9} \\
WA*, w=1.5 &     999 &      \winner{14.6} &    \atleast{18\,345.2} & \atleast{71.5} \\
PUCT, c=1  &    \winner{1\,000} &      15.4 &     4\,212.1 & 23.6 \\
\ouralghat{}       &    \winner{1\,000} &      15.0 &      781.5 &  5.4 \\
\levints{}      &    \winner{1\,000} &      14.8 &      520.2 &  3.2 \\
\ouralgh{}        &    \winner{1\,000} &      15.0 &      \winner{408.1} &  \winner{3.0} \\
\midrule
\multicolumn{5}{c}{Sliding Tile Puzzle \by5 (test)} \\   
\midrule
GBFS       &       0 &        ---&         ---&   ---\\
PUCT, c=1  &       0 &        ---&         ---&   ---\\
A*         &       3 &      \badnum{87.3} &    \atleast{34\,146.3} & \atleast{27.2} \\
\ouralgh{}        &       4 &     \badnum{119.5} &    \atleast{58\,692.0} & \atleast{55.3} \\
\levints{}      &       9 &     \badnum{145.1} &    \atleast{39\,005.6} & \atleast{31.1} \\
\ouralghat{}       &    \winner{1\,000} &     224.0 &     2\,867.2 &  2.8 \\
WA*, w=1.5 &    \winner{1\,000} &     \winner{129.8} &     \winner{1\,989.8} &  \winner{1.6} \\
\bottomrule
\end{tabular}
\end{center}
\caption{Results on the tests sets.
Lengths are the solution depths, and the last three columns are averaged over solved problems only; hence numbers such as $\badnum{123}$ cannot be properly compared with.}
\label{tab:test_results}
\end{table}

\section{Conclusion}

We proposed a new algorithm called \ouralg{} that extends the policy-based \levints{} to using general non-negative loss functions and a heuristic function. 
We provided theoretical results relating the \emph{search} loss with the quality of both the policy and the heuristic.
If the provided heuristic function is \ouralg-admissible, a strictly better upper bound can be shown for \ouralg{} than for \levints{}.
In particular, an admissible heuristic for A* can be turned into a \ouralg-admissible heuristic,
leading to the variant \ouralgh.
We also provided a lower bound based on the information carried by the policy, 
that applies to any search algorithm.
The more aggressive variant \ouralghat{} is the only algorithm which consistently solves all test problems in the three domains tested.
It would be useful to derive more specific bounds showing when \ouralghat{} is expected to work strictly better than \ouralgh{}.
In this paper, the learned heuristic corresponds to the distance to the solution as for A*, but it may be better to directly learn the heuristic $\jlts$ to estimate the actual \emph{search} loss at the solution.
This may however introduce some difficulties since the heuristic function to learn would now depend on the policy, making learning possibly less stable.

\subsubsection*{Acknowledgements}
We would like to thank 
Tor Lattimore,
Marc Lanctot,
Michael Bowling,
Ankit Anand,
Théophane Weber,
Joel Veness
and the AAAI reviewers for their feedback and helpful comments. This research was enabled by Compute Canada (www.computecanada.ca) and partially funded by Canada's CIFAR AI Chairs program. 

\ifarxiv
\bibliographystyle{unsrtnat}
\fi
\bibliography{biblio}

\ifarxiv
\clearpage
\begin{appendices}
\crefalias{section}{appendix}

\section{Table of Notation}\label{sec:notation_table}

A summary of the notation used in the paper can be found in \cref{tab:notation}
(on the last page for convenience).

\section{Machine Specifications}\label{sec:machine}

All experiments were run on 2.4 GHz CPUs with 8 GB of RAM (except 12 GB for Sokoban),
with Python 3.6, Tensorflow 2.0 with Keras. The training procedure was parallelized with 6 CPUs. Tests were performed on a single CPU. No GPU was used.

\section{Additional Implementation Remarks}

Some values ($\pol(n)$, $\flts(n)$) for \levints{} and \ouralg{} are maintained in log-space to avoid numerical issues.
For \ouralg{}, the heuristic value is enforced to be lower bounded by 0. 

For all algorithms (including PUCT in particular) no duplicate queries to the network are performed.

Ties for $f$ and $\flts$ are broken in favor of largest $g$-costs. However, with learned real-valued heuristics,
it is unlikely that two nodes have the same values.

\section{Neural Network Architectures}\label{sec:nnarchs}

The various NN architectures are as follow.
All convolution layers (Conv) have 32 filters of size 2x2 (ReLU) and are not padded.
All fully connected layers (FC) have 128 ReLU units.

\noindent
Policy-based algorithms (\levints{}):
\begin{align*}
    &\text{In}\rightarrow
    \text{Conv}\rightarrow\text{Conv}\rightarrow
    \text{FC}\rightarrow\text{4 linear + log-softmax}
\intertext{Heuristic-based algorithms (A*, WA*, GBFS):}
    &\text{In}\rightarrow
    \text{Conv}\rightarrow\text{Conv}\rightarrow
    \text{FC}\rightarrow\text{1 linear (heuristic)}
\intertext{Policy- and heuristic-based algorithms  (\ouralg, PUCT):}
    &\text{In}\rightarrow
    \text{Conv}\rightarrow\text{Conv}\rightarrow\begin{cases}
    \text{FC}\rightarrow\text{1 linear} &\text{(heuristic)} \\
    \text{FC}\rightarrow\text{4 linear} &\text{+ log-softmax}\\&\text{(policy)} \\
    \end{cases}
\end{align*}

We use an Adam optimization~\citep{kingma2015adam} with a step size of $10^{-4}$,
L2 regularization with constant $10^{-3}$.
We did not try any other hyperparameter values than the ones we report.

\subsection*{Network Inputs}

The input for Sokoban (\by{10}) is a one-hot encoding tensor with shape $10 \times 10 \times 4$
with the 4 features being: wall, avatar, boxes, and goal positions.

The input for The Witness (\by4) is a one-hot encoding tensor of shape $8 \times 8 \times 9$, where the 9 features
are the 4 bullet colors, the entrance, the exit, the separating line being drawn by the player, cells without bullets, and the tip of the line. We use images of size \by8 in our representation despite the problem being represented on a \by4 grid because the line occupies the spaces in-between the grid cells. 

The input for the sliding-tile puzzle (\by5) is a one-hot encoding tensor of shape $5 \times 5 \times 25$, with one image $5 \times 5$ for each tile.

\section*{Detailed Description of Domains}

\paragraph{Sokoban} Figure~\ref{fig:sokoban} shows one Sokoban problem of the Boxoban set~\citep{boxobanlevels}. In this game the goal is to make the avatar (shown in green) to \emph{push} (not pull) all boxes (shown in yellow) to their goal positions (shown in red).
For the state shown in the figure, 
there are four children nodes, one for moving up, down, left and right,
but moving down leads to the same state as the current one and thus a state pruning
may be performed to avoid unnecessary node expansions.
Similarly, the policy can also assign a probability 0 to moving down;
but note that the heuristic may not be able to assign a large number to repeating states
if the heuristic value depends only on the current state.

\begin{figure}[t]
    \centering
    \includegraphics[height=100px]{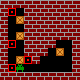}
    \caption{A state of a Boxoban problem.}
    \label{fig:sokoban}
\end{figure}

\paragraph{The Witness}
We use the puzzle from the commercial game The Witness where one needs to find a path connecting the bottom left corner of a \by{4} grid to an \emph{exit point}. This path must separate the cells in the grid into regions with cells of the same color---cells without bullets do not matter.
Figure~\ref{fig:witness} shows one such problem. The cells of the puzzles we used in our experiments could be either empty or contain a bullet with one of four colors. The path from the bottom left corner to the exit point cannot go more than once over a black dot marking the intersection of the cells.
\begin{figure}[t]
    \centering
    \includegraphics[height=0.2\textheight]{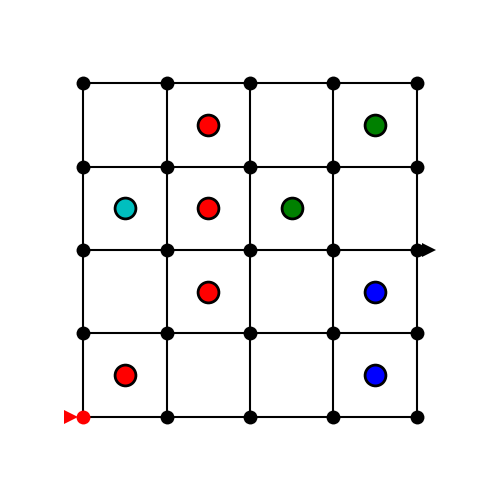}
    \caption{The initial state of a \by{4} witness puzzle problem with four different colors.
    The entrance is at the bottom-left and the exit at the center-right.}
    \label{fig:witness}
\end{figure}

\paragraph{Sliding-Tile Puzzle}
On a grid of $\by5$ cells, there is one empty cell and 24 tiles, labelled from 1 to 24.
The only possible moves are to slide one tile onto the empty cell, which means there are always between 2 and 4 possible moves.
The goal is to slides the tiles until reaching the goal state as depicted on the left-hand-side of Figure~\ref{fig:24-puzzle}.
 The initial state of the problems we used are scrambled and the goal is to find how to restore the configuration shown on the left-hand side of the figure.

\newenvironment{stpmatrix}{
\begin{tikzpicture}[scale=0.52,every node/.style={scale=0.52}]
\draw[xstep=1cm,ystep=1cm,color=gray] (0,0) grid (5,5);
\matrix[matrix of nodes,
inner sep=0pt,
anchor=south west,
nodes={inner sep=0pt,text width=1cm,align=center,minimum height=1cm},
]
}{
\end{tikzpicture}
}

\begin{figure}[h!]
\centering
\begin{stpmatrix}
{   & 1  & 2  & 3  & 4  \\
5  & 6  & 7  & 8  & 9  \\
10 & 11 & 12 & 13 & 14 \\
15 & 16 & 17 & 18 & 19 \\
20 & 21 & 22 & 23 & 24 \\
};
\end{stpmatrix}
\begin{stpmatrix}
{1  & 6  & 2  & 3  & 4  \\
5  &    & 7  & 8  & 9  \\
10 & 11 & 12 & 13 & 14 \\
15 & 16 & 17 & 18 & 19 \\
20 & 21 & 22 & 23 & 24 \\
};
\end{stpmatrix}
\begin{stpmatrix}
{16 & 8  & 22 & 14 & 5  \\
20 & 23 & 9  & 17 &    \\
3  & 4  & 10 & 1  & 6  \\
2  & 24 & 12 & 11 & 18 \\
7  & 15 & 21 & 19 & 13 \\
};
\end{stpmatrix}
\caption{ Sliding tile puzzle (\by5).
Left: The goal state. Middle: Two moves away from the goal state. Right: A test problem.}
\label{fig:24-puzzle}
\end{figure}

\section{Learning Curves}\label{sec:learning_curves}

\Cref{fig:more_learning} shows the number of remaining problems for each domain over the Bootstrap iterations,
for 5 different training runs of each algorithm.
All runs lie within the shadowed areas.
The plotted lines correspond to the runs with the fewest remaining unsolved problems and the end of training, which correspond to the networks that are used for testing.
Note that solving more problems means spending more time doing optimization,
which is why the number of expansions is shorter for algorithms that learn more.

\paragraph{Sokoban}
Algorithms that use a policy (\levints{}, \ouralgh{}, \ouralghat{}), except PUCT, 
solve more problems than the heuristic-based algorithms, but combining
both types of information in \ouralgh{} and \ouralghat{} gives the best results.

\paragraph{The Witness}
This domain appears to be easier to learn for policy-driven algorithms.
In particular, PUCT shines compared to A*, WA* and GBFS, but \ouralgh{}, \ouralghat{} and in particular \levints{} learn even faster.

\paragraph{Sliding Tile Puzzle}
On this domain, GBFS solves at most 4 problems and cannot get off the ground during training. 
This appears to be because GBFS with a random heuristic does not perform a systematic search like A* for example.
PUCT also performs poorly, but still makes steady progress, solving about 40 more problems per Bootstrap iteration, for a total of less than 500 solved problems after exhausting the time budget.
This is likely because PUCT has very long running times in particular for long solutions, as is the case in this domain.
Even though \levints{} and \ouralgh{} solve fewer problems than A*, the trend at the end
suggests that they would soon surpass A* after some more training.
In this domain, the policy does not seem very helpful, and only learning a strong heuristic appears to be working.

\section{More Test Results}\label{sec:best_test_results}

\subsubsection*{Test profiles}

\Cref{fig:profiles} shows the number of expansions (on log scale) per problem, when problems are sorted
from easy to hard for each individual solver.
The networks used are the ones with the best training scores.
On the sliding tile puzzle, PUCT did not solve any problem,
and A* and \levints{} barely solved a few problems.
We also report the profiles of PUCT and \ouralgh{} when using the neural network trained
by \ouralghat{} (dashed lines). This shows a substantial improvement for both algorithms when
supplied with a good neural network, \ouralgh{}'s performance almost matching that of \ouralghat.
Similarly on Sokoban, PUCT vastly benefits from using the network trained with \ouralghat,
but it is still far less efficient than \ouralghat{} with the same network,
and solves less than 800 problems---while being allocated more time than all the other algorithms.

\subsubsection*{Best test results}
In \cref{tab:best_test_results} we report the \emph{best test} results
over the 5 trained networks per algorithm, along with a few more parameters
values for WA* and PUCT.
The results from the domain-independent LAMA planner~\citep{Richter2010} on Sokoban are as reported by \citep{orseau2018policy}.

\subsubsection*{Tests with a fixed time limit per problem}

As for training, we used Bootstrap during testing. 
This is a little unusual and it is more traditional to use a fixed time limit
per problem.
Both settings have issues.
With a fixed time limit,
a rank reversal between solvers can happen by increasing the time limit.
With Bootstrap, such a rank reversal can happen by increasing the total time
or by re-ordering the problems.
However, in both schemes, these issues do not happen
if the solvers solve all problems within the allocated time.

Using the Bootstrap process during testing ensures that algorithms that solve most problems quickly but need a long time to solve a few problems can try hard to solve all problems.
But comparing with a fixed time limit can be informative too.
However, for 1000 problems, two days of computation (as used with Bootstrap) correspond to a fixed limit of less than 3 minutes per problem.
Clearly, using such a low limit will neither provide meaningful results for slower algorithms, nor allow faster algorithms to tackle the few problems where they may take more than 3 minutes.
So instead we compare the algorithms on 100 problems with a time limit of 30 minutes per problem.
The results are shown in \cref{tab:fixed_time}.
As can be observed, the results for the best algorithms are consistent with the results using Bootstrap during testing.

\subsubsection*{Comparing algorithms with fixed parameters}

Although this paper is more interested with the question of how a search algorithm helps the learning process, it is also interesting to consider
how the solvers compare when using a fixed policy and a fixed heuristic function.

To be fair and accurate, several different policies and heuristic functions should be considered.
But to get a first insight into this question, we can use the network trained by \ouralghat{}---which is the only network with both a policy head and a heuristic head with good results.

Results are reported \Cref{tab:fixed}.
The caveat is of course that because it was trained with PHS*, it may not be so surprising that PHS* obtains the best results.
Nevertheless, these results show that combining the heuristic function with the policy gives better results than using each independently.

\begin{table}[tb]
\begin{center}
\begin{tabular}{lrrrr}
\toprule
\multicolumn{1}{c}{Alg.}& \multicolumn{1}{c}{Solved}  & \multicolumn{1}{c}{Length} & \multicolumn{1}{c}{Expansions} & \multicolumn{1}{c}{Time(s)} \\   
\midrule
\multicolumn{5}{c}{Sokoban \by{10} (test)} \\   
\midrule
GBFS         &    856 & \badnum{36.0} &  \atleast{7096.0} & \atleast{52.4} \\
PUCT, c=1    &    902 & \badnum{42.6} & \atleast{11329.8} & \atleast{83.8} \\
A*           &   \winner{1000} &          \winner{31.1} &            9043.5 &           51.8 \\
WA*, w=1.5   &   \winner{1000} &          33.6 &            5461.8 &           36.8 \\
\levints{}   &   \winner{1000} &          38.1 &            1760.5 &           11.9 \\
\ouralgh{}   &   \winner{1000} &          38.0 &            1696.2 &           \winner{10.6} \\
\ouralghat{} &   \winner{1000} &          37.6 &            \winner{1522.1} &           11.5 \\
\midrule
\multicolumn{5}{c}{The Witness \by4 (test)} \\   
\midrule
GBFS         &    444 & \badnum{15.4} & \atleast{23986.0} & \atleast{95.5} \\
WA*, w=1.5   &    927 & \badnum{14.1} & \atleast{19411.9} & \atleast{90.0} \\
A*           &   \winner{1000} & \winner{14.2} &           20390.4 &           97.7 \\
PUCT, c=1    &   \winner{1000} &          15.0 &            3798.8 &           21.0 \\
\levints{}   &   \winner{1000} &          14.9 &             903.4 &            4.9 \\
\ouralgh{}   &   \winner{1000} &          14.9 &             869.4 &            4.7 \\
\ouralghat{} &   \winner{1000} &          15.0 &    \winner{781.5} & \winner{4.1} \\
\midrule
\multicolumn{5}{c}{Sliding Tile Puzzle \by5 (test)} \\   
\midrule
PUCT, c=1    &    786 & \badnum{220.9} &  \atleast{30722.6} & \atleast{115.7} \\
A*           &    835 & \badnum{153.3} & \atleast{112369.6} &  \atleast{81.4} \\
WA*, w=1.5   &    916 & \badnum{175.4} & \atleast{117187.5} &  \atleast{80.8} \\
GBFS         &    998 &          390.2 &  \atleast{72444.5} &  \atleast{50.0} \\
\levints{}   &   \winner{1000} &          233.9 &             3175.5 &             \winner{2.4} \\
\ouralgh{}   &   \winner{1000} &          231.1 &             3026.7 &             \winner{2.5} \\
\ouralghat{} &   \winner{1000} & \winner{224.0} &    \winner{2867.2} &             \winner{2.5} \\
\bottomrule
\end{tabular}
\end{center}
\caption{
Test performance of the search algorithms when using the neural network
trained with PHS* with the best training performance.
Lengths are solution lengths, and the last three columns are averaged over solved problems only;
hence results like \badnum{123} cannot be properly compared with.
$^\dagger$Reported from \citet{orseau2018policy}.}
\label{tab:fixed}
\end{table}

\begin{table}[tb]
\begin{center}
\begin{tabular}{lrrrr}
\toprule
\multicolumn{1}{c}{Alg.}& \multicolumn{1}{c}{Solved}  & \multicolumn{1}{c}{Length} & \multicolumn{1}{c}{Expansions} & \multicolumn{1}{c}{Time(s)} \\   
\midrule
\multicolumn{5}{c}{Sokoban \by{10} (test)} \\   
\midrule
PUCT, c=2   &     154 &      \badnum{22.8} &     \atleast{9\,244.7} & \atleast{34.7} \\
PUCT, c=1.5 &     171 &      \badnum{22.4} &     \atleast{9\,664.0} & \atleast{36.2} \\
PUCT, c=1   &     229 &      \badnum{24.9} &    \atleast{10\,021.3} & \atleast{39.7} \\
GBFS        &     945 &      \badnum{37.7} &     \atleast{7\,284.2} & \atleast{49.2} \\
A*          &     999 &      \winner{32.9} &     \atleast{9\,487.3} & \atleast{67.4} \\
WA*, w=2.5  &    \winner{1\,000} &      36.2 &     4\,754.8 & 32.9 \\
WA*, w=2    &    \winner{1\,000} &      35.6 &     3\,298.5 & 22.8 \\
LAMA$^\dagger$ &    \winner{1\,000} &      51.6 &    3\,151.3 & \badnum{N/A} \\
WA*, w=1.5  &    \winner{1\,000} &      34.6 &     3\,092.6 & 21.5 \\
\levints{}       &    \winner{1\,000} &      40.1 &     2\,640.4 & 19.5 \\
\ouralgh{}  &    \winner{1\,000} &      38.9 &     1\,962.2 & 14.8 \\
\ouralghat{}&    \winner{1\,000} &      37.6 &     \winner{1\,522.1} & \winner{11.3} \\
\midrule
\multicolumn{5}{c}{The Witness \by4 (test)} \\   
\midrule
GBFS        &     290 &      \badnum{13.3} &    \atleast{10\,127.9} & \atleast{44.6} \\
WA*, w=3    &     673 &      \badnum{13.8} &    \atleast{12\,195.6} & \atleast{48.9} \\
WA*, w=2.5  &     674 &      \badnum{14.0} &    \atleast{13\,988.0} & \atleast{51.9} \\
WA*, w=2    &     835 &      \badnum{14.2} &    \atleast{14\,305.2} & \atleast{55.5} \\
A*          &     975 &      \badnum{14.1} &    \atleast{13\,058.9} & \atleast{65.5} \\
WA*, w=1.5  &     999 &      14.6 &    18\,345.2 & 71.5 \\
PUCT, c=1.5 &    \winner{1\,000} &      15.4 &     3\,639.7 & 22.7 \\
PUCT, c=1   &    \winner{1\,000} &      15.3 &     3\,602.7 & 20.8 \\
PUCT, c=2   &    \winner{1\,000} &      15.4 &     2\,890.6 & 18.2 \\
\ouralgh{}  &    \winner{1\,000} &      14.6 &      221.5 &  \winner{1.8} \\
\levints{}       &    \winner{1\,000} &      14.8 &      219.7 &  \winner{1.6} \\
\ouralghat{}&    \winner{1\,000} &      \winner{14.4} &      \winner{190.5} &  \winner{1.7} \\
\midrule
\multicolumn{5}{c}{Sliding Tile Puzzle \by5 (test)} \\   
\midrule
GBFS        &       0 &        ---&         ---&   ---\\
PUCT, c=1   &       0 &        ---&         ---&   ---\\
PUCT, c=1.5 &       0 &        ---&         ---&   ---\\
PUCT, c=2   &       0 &        ---&         ---&   ---\\
A*          &       3 &      \badnum{87.3} &    \atleast{34\,146.3} & \atleast{27.2} \\
\ouralgh{}  &       4 &     \badnum{119.5} &    \atleast{58\,692.0} & \atleast{55.3} \\
\levints{}       &      30 &     \badnum{159.6} &    \atleast{65\,544.6} & \atleast{56.7} \\
\ouralghat{}&    \winner{1\,000} &     222.8 &     2\,764.0 &  3.0 \\
WA*, w=1.5  &    \winner{1\,000} &     \winner{127.3} &     1\,860.5 &  \winner{1.4} \\
WA*, w=3    &    \winner{1\,000} &     134.3 &     1\,840.0 &  \winner{1.4} \\
WA*, w=2    &    \winner{1\,000} &     130.3 &     1\,801.6 &  \winner{1.5} \\
WA*, w=2.5  &    \winner{1\,000} &     133.6 &     \winner{1\,793.1} &  \winner{1.4} \\
\bottomrule
\end{tabular}
\end{center}
\caption{Best test results among the 5 trained networks for each algorithm.
Lengths are solution lengths, and the last three columns are averaged over solved problems only;
hence results like \badnum{123} cannot be properly compared with.
$^\dagger$Reported from \citet{orseau2018policy}.}
\label{tab:best_test_results}
\end{table}

\begin{table}[tb]
\begin{center}
\begin{tabular}{lrrrr}
\toprule
\multicolumn{1}{c}{Alg.}& \multicolumn{1}{c}{Solved}  & \multicolumn{1}{c}{Length} & \multicolumn{1}{c}{Expansions} & \multicolumn{1}{c}{Time(s)} \\   
\midrule
\multicolumn{5}{c}{Sokoban \by{10} (test)} \\   
\midrule
PUCT, c=1    &      41 &      \badnum{38.2} &    \badnum{60\,012.5} & \badnum{425.0} \\
GBFS         &      99 &      37.7 &     \badnum{9\,841.0} &  \badnum{63.9} \\
A*           & \winner{100} & \winner{32.5} & 7\,064.4            & 49.4 \\
WA*, w=1.5   & \winner{100} & 34.1          & 1\,863.5            & 13.7 \\
\levints{}   & \winner{100} & 40.0          & 1\,392.5            & 10.5 \\
\ouralgh{}   & \winner{100} & 39.5          & 848.2             & 6.7 \\
\ouralghat{} & \winner{100} & 37.3          & \winner{717.3}    & \winner{5.6} \\
\midrule
\multicolumn{5}{c}{The Witness \by4 (test)} \\   
\midrule
GBFS         & \winner{100} & 14.7          & 44\,428.2        & 135.7 \\
WA*, w=1.5   & \winner{100} & 13.8          & 11\,503.8        & 39.9 \\
A*           & \winner{100} & \winner{13.6} & 11\,337.7        & 52.0 \\
PUCT, c=1    & \winner{100} & 14.6          & 3\,075.3         & 20.8 \\
\ouralgh{}   & \winner{100} & 13.8          & 193.6          & \winner{1.3} \\
\levints{}   & \winner{100} & 13.9          & 187.1          & \winner{1.2} \\
\ouralghat{} & \winner{100} & 13.7          & \winner{181.1} & \winner{1.4} \\
\midrule
\multicolumn{5}{c}{Sliding Tile Puzzle \by5 (test)} \\   
\midrule
GBFS         & 0            & ---            & ---                & --- \\
PUCT, c=1    & 0            & ---            & ---                & --- \\
A*           & 2            & \badnum{87.0}  & \badnum{897\,140.5}  & \badnum{440.8} \\
\ouralgh{}   & 14           & \badnum{155.4} & \badnum{1\,314\,654.1} & \badnum{907.0} \\
\levints{}   & 22           & \badnum{154.5} & \badnum{1\,074\,269.4} & \badnum{676.9} \\
\ouralghat{} & \winner{100} & 225.2          & 2\,883.0             & 1.7 \\
WA*, w=1.5   & \winner{100} & \winner{128.9} & \winner{1\,957.8}    & \winner{0.9} \\
\bottomrule
\end{tabular}
\end{center}
\caption{Test results for the networks with the best training scores for each algorithm,
with a fixed time limit of 30 minutes per problem.
Lengths are the average solution lengths. The last three columns (Length, Expansions, and Time) are averaged over solved problems only;
hence results like \badnum{123} cannot be properly compared with.}
\label{tab:fixed_time}
\end{table}

\section{\ouralg{} with Safe State Pruning}\label{sec:safepruning}

We say that a \emph{state pruning} is performed when the \code{continue} instruction is triggered
in \cref{alg:phs}.

The theorems of the main text hold for \cref{alg:phs} when $\state(n)=n$ for all $n$, i.e., no state pruning ever happens.
If the latter condition is not true, it may happen that a solution node below a node that was pruned has a
$\fltsmon$-value lower than any other solution node, and pruning prevents the algorithm from finding and returning it.
But when many nodes represent the same states, it may be inefficient to
not perform state pruning.

In \cref{alg:phs_reexp}, we propose a variant that performs safe state pruning
if some assumptions (defined below) are satisfied,
thus avoiding following paths that provably lead to 
solutions with larger $\fltsmon$-values. 

\begin{algorithm}[htb!]
\begin{lstlisting}
def (*\ouralg{}*)_safe_state_pruning($\rootnode$):
  q = priority_queue(order_by=$\flts$)
  q.insert($\rootnode$)
  visited_states = {}
  while q is not empty:
    n = q.extract_min() # node of min $\flts$-value
    s = state(n)
    ($\flts_s$, $\pol_s$) = visited_states.get(s, 
                            default=($\infty$, 0))
    if $\flts_s \leq \flts($n$)$ and $\pol_s \geq \pol($n$)$:
      # Can safely prune the search here
      continue  # state pruning
    if $\pol_s \leq \pol($n$)$:
      visited_states.set(s, ($\flts($n$), \pol($n$)$))
    incur_loss $\llts($n$)$
    if is_solution(n):
      return n
    # Node expansion
    for n$'$ in children(n):
      q.insert(n$'$, $\flts$(n$'$))
  return False
\end{lstlisting}
\caption{The \ouralglong{} algorithm (\ouralg) with safe state pruning.
Assumes that $\pol(\cdot|n)$ and $\jlts(n)$ depend only on $\state(n)$.}
\label{alg:phs_reexp}
\end{algorithm}

For simplicity of the formalism but without loss of generality, we assume that states are `canonical' nodes,
that is, $\stateset \subseteq \nodeset$,
and the function $\state(n)$ maps a node $n$ to its canonical node $n'$.
Though, in practice, a state can be anything such as a string or an integer.

\begin{assumptions}\label{ass:safe_pruning}
a) Most quantities depend only on the state:
    For all nodes $n'\in\children(n)$,
    \begin{align*}
     \llts(n) &= \llts(\state(n)), \\
     \issolution(n) &= \issolution(\state(n)), \\
     \{\state(n'): n'\in\children(n)\} &= \children(\state(n)), \\
    \forall n'\in\children(n'): \pol(n'|n) &= \pol(\state(n') | \state(n)), \\
     \text{and }\jlts(n) &= \jlts(\state(n)),
    \end{align*} 
    and, b) \ouralg{} is defined as in \cref{alg:phs_reexp}.
\end{assumptions}

Some remarks:
\begin{itemize}
    \item These assumptions are met in all three domains and neural networks tested in the main text.
    \item The path loss is still dependent on the path, and thus in general
$\glts(n) \neq \glts(\state(n))$.
    \item Taking $\state(n)=n$ satisfies all of \cref{ass:safe_pruning}a.
\end{itemize}

Recall that $\fltsmon(n) = \max_{n'\in\ancn(n)}\flts(n)$
and that if $\fltsmon(n_1) < \fltsmon(n_2)$ then \ouralg{}
extracts $n_1$ from the priority queue before $n_2$, 
unless an ancestor of $n_1$ has been pruned.

\begin{lemma}[Safe state pruning]\label{lem:safe_pruning}
Under \cref{ass:safe_pruning},
let any two nodes $n_1$ and $n_2$ such that $\state(n_1)=\state(n_2)$
and $n_1$ is expanded before $n_2$ is extracted from the priority queue.
Let $n_1'\in\children(n_1)$ and $n_2'\in\children(n_2)$
such that $\state(n_1')=\state(n_2')$.
Then if $\flts(n_1) \leq \flts(n_2)$ and $\pol(n_1)\geq \pol(n_2)$,
for any solution node $n^*_2 \in \goalset\cap\descn(n_2)$
there exists a solution node $n^*_1\in\goalset\cap\descn(n_1)$
such that $\fltsmon(n^*_1) \leq \fltsmon(n^*_2)$.
\end{lemma}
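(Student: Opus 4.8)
The plan is to exploit \cref{ass:safe_pruning}a to transport any solution node below $n_2$ to a ``mirror'' solution node below $n_1$ with a controlled $\fltsmon$-value, and then to bound $\fltsmon(n^*_1)$ term by term. Fix $n^*_2\in\goalset\cap\descn(n_2)$ and let $n_2=m_0,m_1,\dots,m_k=n^*_2$ be the path from $n_2$ to $n^*_2$. Setting $m'_0:=n_1$ and using $\state(m'_0)=\state(n_1)=\state(n_2)=\state(m_0)$, I would build $m'_i\in\children(m'_{i-1})$ with $\state(m'_i)=\state(m_i)$ by induction, which is possible since $\state(m_i)\in\children(\state(m_{i-1}))=\children(\state(m'_{i-1}))=\{\state(n'):n'\in\children(m'_{i-1})\}$. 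Since $\issolution$ depends only on the state, $n^*_1:=m'_k$ is a solution node in $\descn(n_1)$.

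Next I would compare $\flts$ along the two paths. Because $\llts$, $\jlts$ and $\pol(\cdot\mid\cdot)$ depend only on states, writing $G_i:=\sum_{j=1}^i\llts(m_j)$, $P_i:=\prod_{j=1}^i\pol(m_j\mid m_{j-1})$ and $J_i:=\jlts(m_i)$ gives $\glts(m'_i)=\glts(n_1)+G_i$, $\glts(m_i)=\glts(n_2)+G_i$, $\pol(m'_i)=\pol(n_1)P_i$, $\pol(m_i)=\pol(n_2)P_i$ and $\jlts(m'_i)=\jlts(m_i)=J_i$. Hence $\flts(m'_i)\le\flts(m_i)$ reduces to $\frac{\glts(n_1)+G_i}{\pol(n_1)}\le\frac{\glts(n_2)+G_i}{\pol(n_2)}$, which I would obtain by adding $\frac{\glts(n_1)}{\pol(n_1)}\le\frac{\glts(n_2)}{\pol(n_2)}$ (this is just $\flts(n_1)\le\flts(n_2)$ after cancelling $\jlts(n_1)=\jlts(n_2)$) to $\frac{G_i}{\pol(n_1)}\le\frac{G_i}{\pol(n_2)}$ (from $\pol(n_1)\ge\pol(n_2)$ and $G_i\ge0$); degenerate cases with a zero probability or an infinite $\jlts$ make the relevant side infinite and are trivial.

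The crux is to show $\fltsmon(n_1)\le\fltsmon(n_2)$. Let $a\in\ancn(n_1)$ attain $\fltsmon(n_1)=\flts(a)$. If $a=n_1$ then $\fltsmon(n_1)=\flts(n_1)\le\flts(n_2)\le\fltsmon(n_2)$, and if $a\in\ancn(n_2)$ then $\fltsmon(n_1)=\flts(a)\le\fltsmon(n_2)$ directly; so assume $a$ is a strict ancestor of $n_1$ off the root-to-$n_2$ path. Since $n_1$ is expanded, $a$ is extracted --- and not pruned, since otherwise $n_1$ would never be generated --- strictly before $n_2$ is extracted. At that extraction step the root has already been expanded, $n_2$ is not yet extracted, and no strict ancestor of $n_2$ has been pruned (else $n_2$ would never be generated), so the unique root-to-$n_2$ path has some frontier node $b\in\ancn(n_2)$ in the priority queue; by the minimum-$\flts$ extraction rule, $\flts(a)\le\flts(b)\le\fltsmon(n_2)$.

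Finally I would assemble: since $\ancn(n^*_1)=\ancn(n_1)\cup\{m'_1,\dots,m'_k\}$, we have $\fltsmon(n^*_1)=\max\{\fltsmon(n_1),\max_i\flts(m'_i)\}$; by the previous paragraph and $n_2\in\ancn(n^*_2)$, $\fltsmon(n_1)\le\fltsmon(n_2)\le\fltsmon(n^*_2)$, and by the path comparison together with $m_i\in\ancn(n^*_2)$, $\flts(m'_i)\le\flts(m_i)\le\fltsmon(n^*_2)$ for each $i$; hence $\fltsmon(n^*_1)\le\fltsmon(n^*_2)$. I expect the third paragraph to be the main obstacle: it is the only place where static inequalities on $\flts$ and $\pol$ do not suffice and one must invoke the search dynamics, namely that some ancestor of $n_2$ always sits on the frontier with $\flts$-value no smaller than that of the node being extracted, and that safe pruning cannot have occurred anywhere along the root-to-$n_2$ path.
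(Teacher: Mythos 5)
Your proof is correct and takes essentially the same route as the paper's: you transport the path from $n_2$ to $n^*_2$ to a same-state path below $n_1$ and propagate $\flts(m'_i)\leq\flts(m_i)$ (and $\pol(m'_i)\geq\pol(m_i)$) using the state-dependence of $\llts$, $\pol(\cdot|\cdot)$ and $\jlts$ together with $\flts(n_1)\leq\flts(n_2)$ and $\pol(n_1)\geq\pol(n_2)$, exactly as the paper's chain of equivalences plus induction. The only difference is that you explicitly justify, via the frontier-ancestor argument, the ordering fact $\fltsmon(n_1)\leq\fltsmon(n_2)$, which the paper merely asserts ``as for BFS''.
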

\begin{proof}

First, observe that, as for BFS, since $n_1$ is expanded before $n_2$ is extracted from the queue then necessarily
$\fltsmon(n_1)\leq \fltsmon(n_2)$.
Then,
\begin{align*}\resetalph
                    \flts(n_1) &\alphoverset{\leq} \flts(n_2) \\
    \Leftrightarrow \frac{\glts(n_1)}{\pol(n_1)} &\alphoverset{\leq} \frac{\glts(n_2)}{\pol(n_2)} \\
    \Leftrightarrow \frac{\glts(n_1)+\llts(n_1')}{\pol(n_1)} &\alphoverset{\leq} \frac{\glts(n_2) + \llts(n_2')}{\pol(n_2)} \\
    \Leftrightarrow \jlts(n_1')\frac{\glts(n_1)+\llts(n_1')}{\pol(n_1)\pol(n_1'|n_1)} &\alphoverset{\leq} \jlts(n_2')\frac{\glts(n_2) + \llts(n_2')}{\pol(n_2)\pol(n_2'|n_2)} \\
    \Leftrightarrow \flts(n_1') &\alphoverset{\leq} \flts(n_2') \\
    \Rightarrow 
    \max\{\fltsmon(n_1), \flts(n_1')\}
    &\leq \max\{\fltsmon(n_1), \flts(n_2')\}
    \\
    \Leftrightarrow
    \fltsmon(n_1') &\alphoverset{\leq} \fltsmon(n_2')
\end{align*}\resetalph
where
\alphnextref{} by assumption,
\alphnextref{} since $\jlts(n_1)=\jlts(n_2)$,
\alphnextref{} using $\llts(n_1')=\llts(n_2')$ and $1/\pol(n_1) \leq 1/\pol(n_2)$,
\alphnextref{} using $\pol(n_1'|n_1) = \pol(n_2'|n_2)$ and $\jlts(n_1')=\jlts(n_2')$
\alphnextref{} by definition of $\flts$,
\alphnextref{} using $\fltsmon(n_1) \leq \fltsmon(n_2)$ and by definition of $\fltsmon$.

Furthermore, since $\pol(n_1'|n_1) = \pol(n_2'|n_2)$ we also have $\pol(n_1') \geq \pol(n_2')$,
and therefore the result holds by induction.
\end{proof}

\Cref{lem:safe_pruning} means that we can safely prune $n_2$
and that \cref{alg:phs_reexp} will still return a node in $\argmin_{n^*\in\goalset} \fltsmon(n^*)$.
We can now prove that performing state pruning as in \cref{alg:phs_reexp} does not harm the bound of 
\cref{thm:upper_bound}:

\begin{theorem}[\ouralg{} bound with safe state pruning]\label{thm:upper_bound_pruning}
Under \cref{ass:safe_pruning}, the statement of \cref{thm:upper_bound} holds.
\end{theorem}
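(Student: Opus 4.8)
The plan is to reduce \cref{thm:upper_bound_pruning} to \cref{thm:upper_bound} by checking that, even with state pruning, (A) \cref{alg:phs_reexp} still returns a solution node $n^*\in\argmin_{n^*\in\goalset}\fltsmon(n^*)$, and (B) the set $\visited$ of distinct nodes on which a loss is incurred is contained in $\nodeset_\flts(n^*)$. Given (A) and (B) we immediately get $\Llts(\ouralg,n^*)=\sum_{n\in\visited}\llts(n)\le\sum_{n\in\nodeset_\flts(n^*)}\llts(n)=\Llts(\nodeset_\flts(n^*))$, and then the chain of estimates in the proof of \cref{thm:upper_bound} applies verbatim: $\nodeset_\flts(n^*)$ is a tree rooted at $\rootnode$ because $\fltsmon$ is non-decreasing from parent to child, so $\Llts(\nodeset_\flts(n^*))\le\sum_{n'\in\leafset_\flts(n^*)}\glts(n')\le\fltsmon(n^*)\sum_{n'\in\leafset_\flts(n^*)}\pol(n')/\jltsmon(n')$, and substituting $\fltsmon(n^*)=\jltsmon(n^*)\glts(n^*)/\pol(n^*)$ yields \cref{eq:upperboundsolution}.

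Claim (A) is where \cref{lem:safe_pruning} does the work: every time a node $n_2$ triggers the \code{continue} in \cref{alg:phs_reexp}, the stored pair was written by a strictly earlier, expanded node $n_1$ with $\state(n_1)=\state(n_2)$, $\flts(n_1)\le\flts(n_2)$ and $\pol(n_1)\ge\pol(n_2)$, whose children were all inserted; so by the lemma no minimal-$\fltsmon$ solution is ever made unreachable by pruning, and since BFS returns the first extracted-and-not-pruned solution node, $n^*$ attains $\min_{n^*\in\goalset}\fltsmon(n^*)$ (this is the statement made right after \cref{lem:safe_pruning}).

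For (B) I would first record two facts. (i) No node is extracted more than once: a node only enters the queue during an expansion of its parent, so by induction on depth---the root being inserted exactly once---each node is inserted, hence extracted and expanded, at most once. (ii) For any node $m$ that is expanded strictly before $n^*$ is extracted, $\flts(m)\le\fltsmon(n^*)$: if $m\in\ancn(n^*)$ this is immediate; otherwise let $\hat n$ be the shallowest ancestor of $n^*$ not yet expanded at the moment $m$ is popped (it exists since $n^*$ itself is not yet expanded), note that $\parent(\hat n)$ is a strictly shallower ancestor of $n^*$, hence already expanded and $\ne m$, so $\hat n$ was inserted before $m$ is popped; moreover $\hat n$ has not been extracted (it is not yet expanded, and since it lies on the path to $n^*$ it must eventually be expanded and, by (i), is therefore never pruned), so $\hat n$ is in the queue when $m$ is popped, giving $\flts(m)\le\flts(\hat n)\le\fltsmon(n^*)$. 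Applying (ii) to every ancestor of an expanded node $n$---all of which are expanded, by the same depth induction, and all extracted before $n^*$---gives $\fltsmon(n)=\max_{m\in\ancn(n)}\flts(m)\le\fltsmon(n^*)$, i.e.\ $n\in\nodeset_\flts(n^*)$, which is (B).

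I expect the main obstacle to be part (ii): one has to be careful that an ancestor of $n^*$ that is still on the queue is never discarded by pruning before $n^*$ is reached. This is exactly where \cref{ass:safe_pruning} and \cref{lem:safe_pruning} are needed---every ancestor of $n^*$ has to be expanded for $n^*$ to be inserted at all, and no node is extracted twice, so such an ancestor, once inserted, stays in the queue until it is expanded. Everything else is bookkeeping, since the $\fltsmon$-domination underlying the legitimacy of pruning is already isolated in \cref{lem:safe_pruning}, and the arithmetic closing the bound is copied unchanged from \cref{thm:upper_bound}.
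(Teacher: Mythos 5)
Your proposal is correct and follows essentially the same route as the paper, which simply invokes \cref{lem:safe_pruning} to guarantee that pruning never discards the minimal-$\fltsmon$ solution and then reuses the counting argument of \cref{thm:upper_bound}. Your extra bookkeeping---showing that every node incurring loss lies in $\nodeset_\flts(n^*)$ (the only inclusion the bound actually needs, since with pruning not all of $\nodeset_\flts(n^*)$ is expanded) and that ancestors of the returned node are never pruned---just makes explicit what the paper's one-line proof leaves implicit.
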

\begin{proof}
Follows from \cref{lem:safe_pruning}, and the rest is as in the proof of \cref{thm:upper_bound}.
\end{proof}

If many states are pruned during the search, then this can only improve $\Llts(\ouralg, .)$
over \cref{thm:upper_bound_pruning}, that is, the bound displayed in \cref{thm:upper_bound}
can become loose.

It can also happen that many state prunings are missed due to the necessity of performing only \emph{safe} pruning.
In such cases, the bound of \cref{thm:upper_bound_pruning} still holds, but the algorithm 
could be more efficient.
For such cases one can use the IBEX framework~\citep{helmert2019iterative}
which iteratively increases the cost bound (here, we would choose $\log \fltsmon(\cdot)$)
at the price of a log-cost factor compared to the ideal algorithm that orders the search optimally so as to perform as many state prunings as possible.

\begin{figure*}[t]
    \centering
    \includegraphics[height=0.3\textheight]{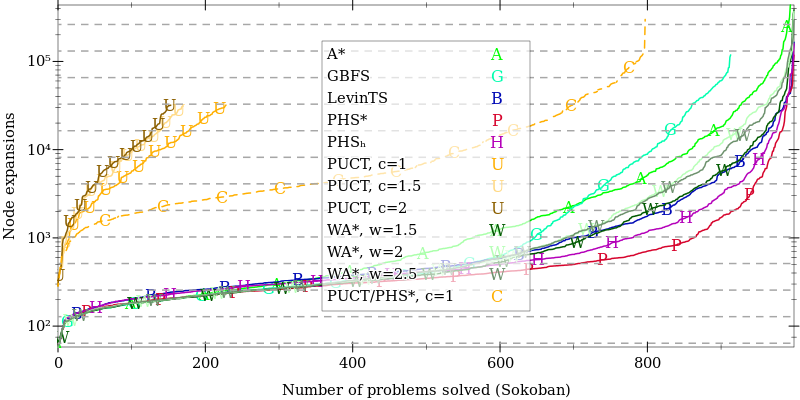}
    \includegraphics[height=0.3\textheight]{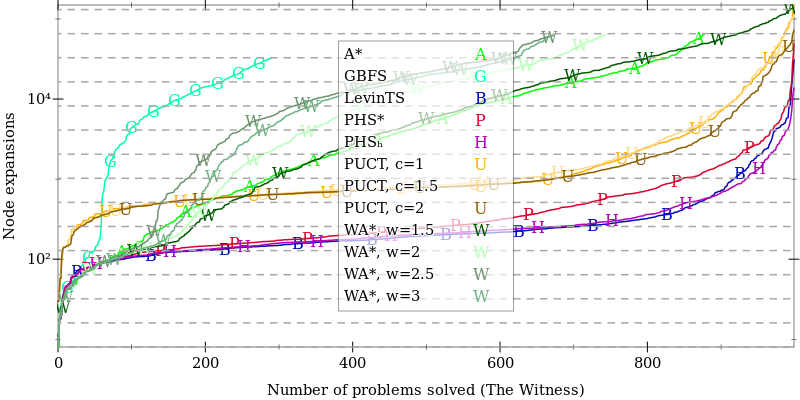}
    \includegraphics[height=0.3\textheight]{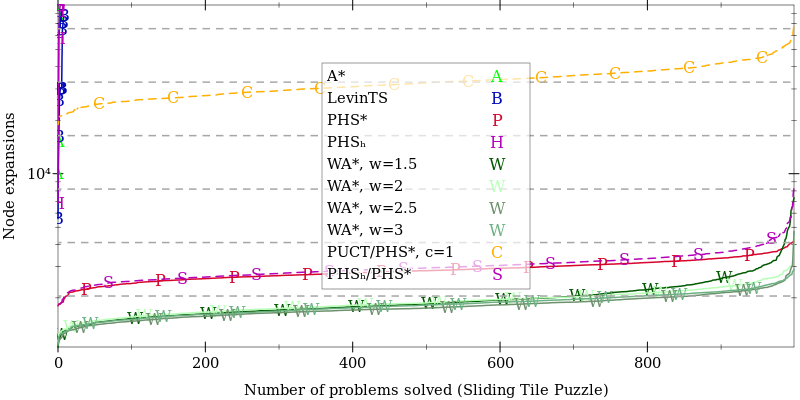}
    \caption{Test profiles: Number of node expansions (log scale) per problem
    on the test set, for the networks with the best training results for each algorithm.
    Horizontal dashed lines are the powers of 2.
    A dashed line with label A/B is for the search algorithm A that uses
    the neural network trained with algorithm B.}
    \label{fig:profiles}
\end{figure*}

\begin{figure*}[t]
    \centering
    \includegraphics[width=0.49\textwidth]{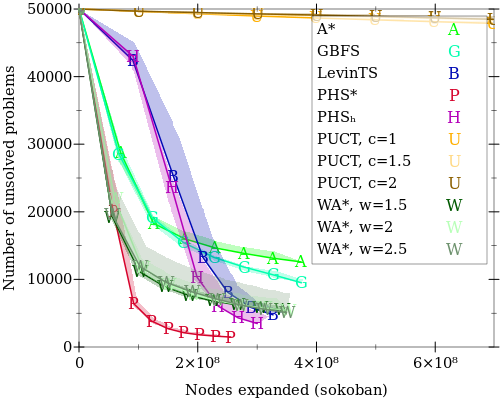}
    \includegraphics[width=0.49\textwidth]{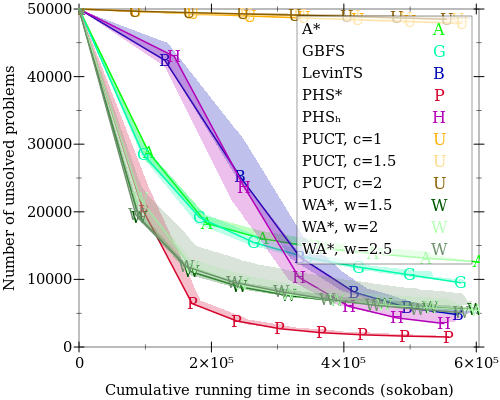}
    \includegraphics[width=0.49\textwidth]{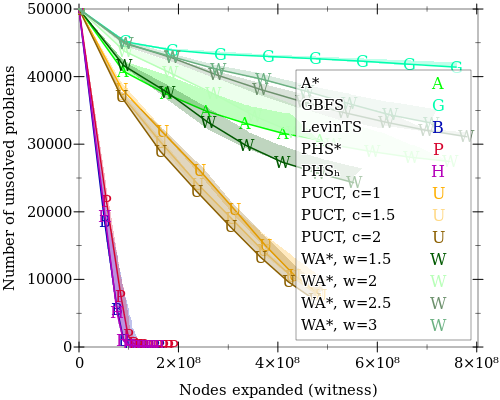}
    \includegraphics[width=0.49\textwidth]{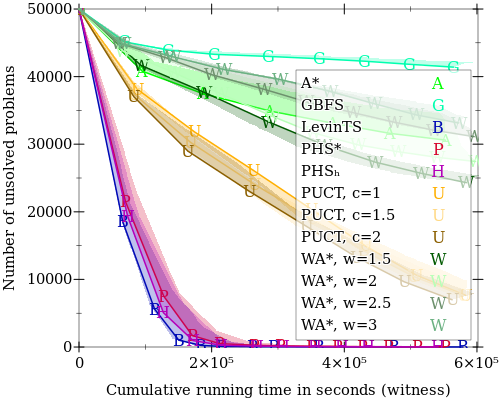}
    \includegraphics[width=0.49\textwidth]{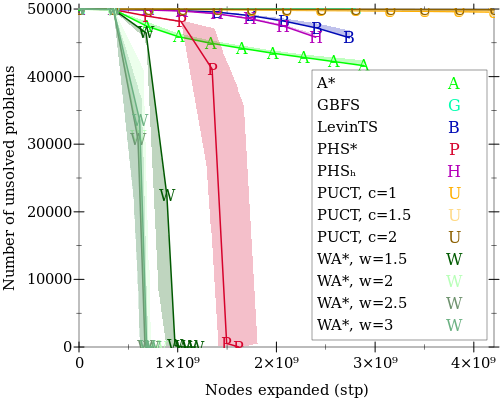}
    \includegraphics[width=0.49\textwidth]{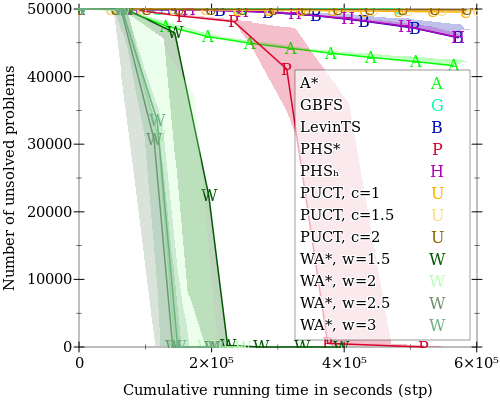}
    \caption{Number of unsolved problems compared to the cumulative number of nodes
    expanded (left) or cumulative time (right) over the Bootstrap iterations.
    Lines correspond to the runs with the least remaining unsolved problems.
    All 5 training runs per algorithm lie within the colored areas.}
    \label{fig:more_learning}
\end{figure*}

\clearpage

\begin{table}[b]
    \centering
    \begin{tabular}{l|l}
        Symbol & Meaning \\
        \hline
$\rootnode$   & The root of the tree                            \\
$n$           & A node in the tree                              \\
$n^*$         & A solution node or some target node             \\
$\parent(n)$  & Parent node of $n$                              \\
$\ancn(n)$    & Ancestors of $n$ and $n$ itself                 \\
$\descn(n)$   & Descendants of $n$ and $n$ itself               \\ 
$d(n)$        & Depth of node $n$\ \ ($d(\rootnode)=0$)         \\
$d_0(n)$      & $=d(n)+1$                                       \\
$\nodeset$    & Set of nodes                                    \\
$\llts(n)$    & Instantaneous loss function at a node $n$       \\
$\glts(n)$    & Path loss from the root to a node $n$           \\
$\Llts(S, n)$ & Search loss of algorithm $S$ when reaching $n$  \\
$\flts(\cdot)$    & Evaluation function of \ouralg{}            \\
$\fltsmon(\cdot)$ & Like $\flts$, but made monotone non-decreasing      \\
                  & from parent to child                        \\
$h(n)$        & Estimate of the $\glts$-cost of a solution node \\
              & below $n$ (A*-like heuristic)                   \\
$\jlts(\cdot)$    & Heuristic factor                            \\
$\jltsmon(\cdot)$ & Heuristic factor corresponding to $\fltsmon$\\ 
$\jlts_h(\cdot)$  & $\jlts$ that uses an A*-like heuristic $h$; \\
              & \ouralg-admissible if $h$ is A*-admissible      \\
$\jltshat_h(\cdot)$  & Like $\jlts_h$ but aggressively rescaled \\
\ouralg       & \ouralglong                                     \\
\ouralgh      & \ouralg{} where $\jlts=\jlts_h$ for given $h$   \\
\ouralghat    & \ouralg{} where $\jlts=\jltshat_h$ for given $h$\\
    \end{tabular}
    \caption{Notation used in the paper.}
    \label{tab:notation}
\end{table}

\end{appendices}

\fi

\end{document}